\documentclass[journal]{IEEEtran}
\usepackage{amsmath,amsfonts,amsthm,amssymb}
\usepackage{algorithmic}
\usepackage{array}
\usepackage{textcomp}
\usepackage{stfloats}
\usepackage{url}
\usepackage{verbatim}
\usepackage{graphicx}
\usepackage{xcolor}
\usepackage{pgfplots}
\pgfplotsset{compat=newest}
\usepgfplotslibrary{fillbetween, groupplots}
\usepackage{bm}
\usepackage{bbm}
\usepackage[font=small]{caption}
\usepackage{subcaption}
\usepackage{tikzscale}
\usetikzlibrary{%
	arrows,%
	arrows.meta,%
	automata,%
	calc,%
	chains, %
	patterns,%
	plotmarks,%
	positioning,%
	shapes.geometric,%
	shapes,%
	snakes,%
        spy,%
}
\usepackage{nicefrac}
\usepackage{diagbox}
\usepackage{stackengine}
\usepackage{booktabs}
\hyphenation{op-tical net-works semi-conduc-tor IEEE-Xplore}
\def\BibTeX{{\rm B\kern-.05em{\sc i\kern-.025em b}\kern-.08em
    T\kern-.1667em\lower.7ex\hbox{E}\kern-.125emX}}
\usepackage{balance}

\theoremstyle{plain}

\newtheorem{proposition}{Proposition}
\newtheorem*{proposition*}{Proposition}

\theoremstyle{definition}
\newtheorem{definition}{Definition}

\theoremstyle{remark}

\newtheorem{example}{Example}

\DeclareMathOperator*{\argmax}{arg\,max}
\DeclareMathOperator*{\argmin}{arg\,min}

\usepackage{academicons}
\newcommand{\orcid}[1]{\href{https://orcid.org/#1}{\textcolor[HTML]{A6CE39}{\aiOrcid}}}

\newcommand{\bfx}{\bm{x}}

\newcommand{\bfc}{\bm{c}}
\newcommand{\bfb}{\bm{b}}
\newcommand{\bfe}{\bm{e}}
\newcommand{\bfX}{\bm{X}}
\newcommand{\bft}{\bm{t}}

\newcommand{\bfp}{\bm{p}}

\newcommand{\bfs}{\bm{s}}
\newcommand{\bfv}{\bm{v}}
\newcommand{\bfu}{\bm{u}}
\newcommand{\bfd}{\bm{d}}

\newcommand{\bfL}{\bm{L}}
\newcommand{\bfA}{\bm{A}}
\newcommand{\calP}{\mathcal{P}}
\newcommand{\calX}{\mathcal{X}}
\newcommand{\calR}{\mathcal{R}}
\newcommand{\calD}{\mathcal{D}}
\newcommand{\calM}{\mathcal{M}}

\newcommand{\msv}{\mathsf{v}}
\newcommand{\msc}{\mathsf{c}}
\newcommand{\hbfd}{\hat{\bm{d}}}
\newcommand{\hd}{\hat{d}}
\newcommand{\msFA}{\mathsf{FA}}
\newcommand{\msMD}{\mathsf{MD}}

\newcommand{\rowspan}{\ensuremath{\mathsf{Sp}_{\text{r}}}}

\newcommand{\nm}{n_{\mathsf{m}}}
\newcommand{\nmest}{\hat{n}_{\mathsf{m}}}
\newcommand{\nmmax}{n_{\mathsf{m}}^{\max}}
\newcommand{\rvnm}{N_{\mathsf{m}}}
\newcommand{\transpose}{^\mathsf{T}}
\newcommand{\bfa}{\bm{a}}
\newcommand{\tbfd}{\tilde{\bm{d}}}
\newcommand{\td}{\tilde{d}}
\newcommand{\tbfs}{\tilde{\bm{s}}}
\newcommand{\ts}{\tilde{s}}

\newcommand{\APP}{\mathsf{APP}}
\newcommand{\calE}{\mathcal{E}}

\newcommand{\setpoints}{\mathcal{C}}

\newcommand{\points}{\bfc}

\newcommand{\numclusters}{k}
\newcommand{\numclustersest}{\hat{k}}
\newcommand{\numclustersmax}{\numclusters_{\max}}
\newcommand{\silhouette}{\mathsf{s}}
\newcommand{\silhouettethres}{\mathsf{s}^{\text{thres}}}
\newcommand{\intradis}{\mathsf{a}}
\newcommand{\interdis}{\mathsf{b}}
\newcommand{\dunn}{\mathsf{d}}
\newcommand{\pca}{p}
\newcommand{\vecpca}{\bfp}
\newcommand{\norm}[1]{\left\lVert#1\right\rVert}

\definecolor{darkblue}{rgb}{0.07843,0.16863,0.54902}
\definecolor{darkgreen}{rgb}{0,0.49804,0}%
\definecolor{brown}{rgb}{0.85098, 0.32941, 0.10196}%

\newcommand{\FL}{FL}

\newcommand{\NAME}{FedGT}

\begin{document}
\title{FedGT: Identification of Malicious Clients in Federated Learning with Secure Aggregation}

\author{\IEEEauthorblockN{Marvin Xhemrishi, \emph{Student Member, IEEE}, Johan \"Ostman, Antonia Wachter-Zeh \emph{Senior Member, IEEE},\\  and Alexandre Graell i Amat, \emph{Senior Member, IEEE} \vspace{-.5cm}} 
	\thanks{M. Xhemrishi and A. Wachter-Zeh are with School of Computation, Information and Technology, Technical University of Munich, 80333 Munich, Germany~(e-mails: {\{marvin.xhemrishi, antonia.wachter-zeh\}@tum.de}).} 

\thanks{J. \"Ostman is with AI Sweden, Lindholmspiren 11, 41756 Gothenburg, Sweden~(e-mail: {johan.ostman@ai.se}).} 

 	\thanks{A. Graell i Amat is with the Department of Electrical Engineering, Chalmers University of Technology, 41296 Gothenburg, Sweden~(e-mail: { alexandre.graell@chalmers.se}).} 
 
	\thanks{This work was partially supported by the German Research Foundation (DFG) under Grant Agreement No. WA 3907/7-1, by the Swedish Innovation Agency (VINNOVA) under grant 2021-04783, by the Swedish Research Council (VR) under grant 2020-03687, and by the Wallenberg AI, Autonomous Systems and Software Program (WASP) funded by the Knut and Alice Wallenberg Foundation.}
}

\maketitle

\begin{abstract}
We propose FedGT, a novel framework for identifying malicious clients in federated learning with secure aggregation. 
Inspired by group testing, the framework leverages overlapping groups of clients to identify the presence of malicious clients in the groups via a decoding operation.
The clients identified as malicious are then removed from the model training, which is performed over the remaining clients. 
By choosing the size, number, and overlap between groups, FedGT strikes a balance between privacy and security.
Specifically, the server learns the aggregated model of the clients in each group\textemdash vanilla federated learning and secure aggregation correspond to the extreme cases of FedGT with group size equal to one and the total number of clients, respectively.
The effectiveness of FedGT is demonstrated through extensive experiments on the MNIST, CIFAR-10, and ISIC2019 datasets in a cross-silo setting under different data-poisoning attacks. 
These experiments showcase FedGT's ability to identify malicious clients, resulting in high model utility. 
We further show that FedGT significantly outperforms the private robust aggregation approach based on the geometric median recently proposed by Pillutla \emph{et al.} in multiple settings.
\end{abstract}

\begin{IEEEkeywords}
Federated learning, group testing, malicious clients, poisoning attacks, secure aggregation, security, privacy.
\end{IEEEkeywords}

\section{Introduction}
\IEEEPARstart{F}{ederated} learning (FL) \cite{McM17} is a distributed machine learning paradigm that enables multiple devices (clients) to collaboratively train a machine learning model under the orchestration of a central server.  To preserve data privacy,  clients share their locally-trained models with the central server instead of their raw data. 

In its original form, {\FL} is susceptible to  model-inversion attacks \cite{Fre15,Wan19}, which allow the central server to infer clients' data from their local model updates. As demonstrated in~\cite{Dimitrov22}, such attacks can be mitigated by employing secure aggregation protocols \cite{Bon17,Bel20}. 
These protocols guarantee that the server only observes the aggregate of the client models instead of individual models. 

A salient problem in {\FL} is poisoning attacks \cite{Bar19}, where malicious and/or faulty clients corrupt the jointly-trained global model by introducing
mislabeled training data (\emph{data poisoning}) \cite{Tol20,Wan20}, or by modifying local model updates (\emph{model poisoning}) \cite{Fung18}.  
Poisoning attacks pose a serious security risk for critical applications.
Defensive measures against these threats generally fall into two categories: robust aggregation and anomaly detection. 
Robust aggregation techniques~\cite{Bla17, Yin2018, Cao19} are reactive approaches designed to mitigate the effect of poisoned models, whereas anomaly detection is inherently proactive and aims to identify and eliminate corrupted models~\cite{Li20, Mallah21, nguyen22}.
Robust aggregation techniques can introduce bias, especially when clients have heterogeneous data~\cite{Li20}, and their effectiveness tends to diminish with an increasing number of malicious clients~\cite{zhang22det}.
Moreover, a recurring issue with defense mechanisms is their reliance on accessing individual client models, leaving clients vulnerable to model-inversion attacks.
Addressing resiliency against poisoning attacks and devising protocols for the identification of  malicious clients in {\FL} without access to individual client models remains a challenge~\cite{Kai21, gong23}.
Notably, privacy-preserving techniques such as secure aggregation enhance clients' privacy at the expense of camouflaging adversaries~\cite{Kai21}. 
Hence, there is a fundamental trade-off between security and privacy.

{\emph{Contribution:}} In this paper, we propose FedGT, a novel framework for identifying  malicious clients in {\FL} with secure aggregation. Our framework is inspired by  group testing \cite{Dor43}, 
a paradigm  to identify defective items in a large population that significantly reduces the required number of tests compared to the naive approach of testing each item individually.
FedGT's key idea is to group clients into overlapping groups. 
For each group, the central server observes the aggregated model of the clients  
and runs a suitable test to identify the presence of malicious clients in the group. 
The malicious clients are then identified through a decoding operation at the server, allowing for their removal from the training of the global model. %
FedGT trades-off client's data privacy, provided by secure aggregation, with \emph{security}, understood here as the ability to identify malicious clients. It encompasses both non-private vanilla FL and privacy-oriented methods, e.g., secure aggregation, by selecting group sizes of one and the total amount of clients, respectively. 
However, by allowing group sizes between these two extremes, FedGT strikes a balance between privacy and security, i.e., improved identification capability comes at the cost of secure aggregation involving fewer clients.
Moreover, FedGT does not require any hyper-parameter tuning.

We showcase FedGT's effectiveness in identifying malicious clients through experiments on the MNIST, CIFAR-10, and ISIC2019 datasets under both targeted and untargeted data-poisoning attacks. 
Our focus is specifically on the cross-silo scenario, wherein the number of clients is moderate (up to {$50$~\cite{Ogier2022flamby}}) and data-poisoning is the predominant attack vector~\cite{Shej21}. Fig.~\ref{fig:mal_vs_att_acc} shows the performance of two proposed versions of FedGT as a function of the  numbers of malicious clients, $\nm$, for a scenario with 15 clients and a targeted label-flipping attack. The results  correspond to a classification problem over the CIFAR10 dataset after 30 communication rounds. 
When no defense mechanism is in place, the attack success significantly increases as the number of malicious clients grows.
Remarkably, FedGT enables the identification and removal of malicious clients with low misdetection and false alarm probabilities. This leads to a substantially reduced attack accuracy\textemdash significantly outperforming the recently-proposed %
robust federated aggregation (RFA) protocol based on the geometric median~\cite{Pillutla22}\textemdash  while achieving a lower communication complexity.%

\section{Related work}\label{sec:rel_work}

To the best of our knowledge, only the works \cite{So21,  Pillutla22, Zhang21} address resiliency against poisoning attacks in conjunction with secure aggregation. The work  \cite{So21} is the first single-server solution to account for both privacy and security in {\FL}.
The protocol is based on drop-out resilient secure aggregation where the server utilizes secret sharing to first obtain the pairwise Euclidean distance between the clients' updates and then selects what clients to aggregate by means of multi-Krum~\cite{Bla17}. However, it is not clear if the pairwise differences can leak extra information. In \cite{Pillutla22}, a robust aggregation protocol, dubbed RFA, is proposed. This protocol is based on an approximate geometric median, computed by means of secure aggregation. However, RFA lacks the capability to identify malicious clients and is known to be inferior to other robust aggregation techniques, especially when dealing with heterogeneous client data~\cite{Li23}. 
The work by \cite{Zhang21} presents a privacy-preserving tree-based robust aggregation method. 
In particular, each leaf in the tree consists of a subgroup of clients who securely aggregate their local models.
To achieve privacy between subgroups, masking is done on all but the last parameters in the aggregated models.
By using the Euclidean distance between the unmasked parameters and the corresponding parameters in the global model, an outlier removal scheme, based on variance thresholding, is used iteratively to determine what groups should contribute to the global model. The approach in~\cite{Zhang21} is the method closest to ours as it relies on dividing clients into subgroups and on testing the group aggregates. 
However, contrary to FedGT, it is unable to identify malicious clients and to leverage the information of overlapping groups.

\section{Preliminaries}\label{sec:prel}
\subsection{Notation}

We use lowercase bold letters and uppercase bold letters to denote row vectors and matrices, respectively,  e.g., $\bfx$ and $\bfX$. The $i$-th element of vector $\bfx$ is denoted as $x_i$.  %
We use calligraphic letters  to denote sets, e.g., $\calX$. For an integer $x$, we use the notation $[x]$ to denote the set of all positive integers less than equal to $x$, i.e., $[x] = \{1, 2, \dots, x\}$. The empty set is denoted by $\varnothing$. The logical disjunction operator is represented by $\vee$ and the logical conjunction operator  by $\land$. For a matrix $\bfX$ and a vector $\bfx$, we use the notation $\bfx\vee \bfX\transpose$ to denote a matrix-vector operator, similar to the multiplication, where the dot product is performed using the logical conjunction and the addition is computed using the logical disjunction. Finally, we denote by $w_\mathsf{H}(\bfx)$ the Hamming weight of vector $\bfx$, i.e., the number of nonzero entries of $\bfx$.
\begin{figure}[t!]
\resizebox{.9\linewidth}{!}{\definecolor{chocolate}{rgb}{0.48, 0.25, 0.0}
\definecolor{amber}{rgb}{1.0, 0.75, 0.0}
\begin{tikzpicture}

\begin{groupplot}[ 
    group style={
        group size=1 by 1,
   },
]     
    \nextgroupplot
    [
        grid = both, 
        grid style ={dotted,draw=black!90},
        tick label style ={/pgf/number format/fixed},
        minor tick num=3, 
        xmode=linear, 
        ymode=linear, 
        ymax = 0.4, 
        ymin = 0, 
        xmax = 5, 
        xmin = 0, 
        xlabel= \footnotesize $\nm$, 
        ylabel=\footnotesize attack accuracy,
        legend style =
        {
             legend columns=1,
             at={(axis cs:1,0.32)},
             fill=white,
             draw=black,
             anchor=center,
             legend cell align=left,
             align=left
        }
    ]
    
     \addplot[brown, ultra thick, solid, mark=pentagon*, mark options={fill=white} ,mark size=2.5pt] table [x index = {0}, y index={2}, col sep=comma]{./plots/csv_files/m_vs_att_acc/CIFAR10_targeted_attack_acc_vs_m.csv};
    \addlegendentry{\footnotesize no defense}

    \addplot[chocolate, ultra thick, solid, mark=diamond*, mark options={fill=white} ,mark size=2.7pt] table [x index = {0}, y index={4}, col sep=comma]{./plots/csv_files/m_vs_att_acc/CIFAR10_targeted_attack_acc_vs_m.csv}; 
    \addlegendentry{\footnotesize RFA~\cite{Pillutla22}}
    
    \addplot[red, ultra thick, mark=*, mark options={fill=white}, mark size=2pt] table [x index = {0}, y index={3}, col sep=comma]  {./plots/csv_files/m_vs_att_acc/CIFAR10_targeted_attack_acc_vs_m.csv}; 
    \addlegendentry{\footnotesize \NAME-$\nmest$}

    \addplot[amber, ultra thick, mark=triangle*, mark options={fill=white}, mark size=3pt] table [x index = {0}, y index={1}, col sep=comma]{./plots/csv_files/m_vs_att_acc/CIFAR10_targeted_attack_acc_vs_m_DELTA.csv};
    \addlegendentry{\footnotesize {\NAME}-$\Delta$}
    
    \addplot[darkblue, ultra thick, solid, mark=square*, mark options = {fill = white}, mark size=2pt] table [x index = {0}, y index={1}, col sep=comma]{./plots/csv_files/m_vs_att_acc/CIFAR10_targeted_attack_acc_vs_m.csv};
    \addlegendentry{\footnotesize oracle}
    
\end{groupplot}

\end{tikzpicture}
\vspace{-1.5ex}
\caption{Attack accuracy on a cross-silo setting with $n=15$ clients on the CIFAR10 dataset for varying number of malicious clients, $\nm$, conducting a label-flip targeted attack.  }
    \label{fig:mal_vs_att_acc}
    \vspace{-3ex}
\end{figure}

\subsection{Group Testing} %
Group testing~\cite{Dor43,Ald19} encompasses a family of test schemes aiming at identifying items affected by some particular condition, usually called \emph{defective} items (e.g., individuals infected by a virus), among a large population of $n$ items (e.g., all individuals). The overarching goal of group testing is to design a testing scheme such that the number of tests needed to identify the defective items is minimized. The principle behind group testing is that, if the number of defective items is significantly smaller than $n$, then negative
tests on groups (or pools) of items can spare many individual tests. Following this principle,  items are grouped into overlapping groups, and tests are performed on each group. Based on the test results on the groups, the defective items can then be identified\textemdash in general with some probability of error\textemdash via a decoding operation.

\subsection{Threat Model}
We consider a cross-silo scenario with an honest-but-curious server and $n$ clients out of which $\nm$ are compromised (referred to as \emph{malicious} clients).
The number  of malicious clients and their identities are unknown to the server.

A client may be compromised due to hardware malfunction or adversarial corruption.
In the latter case, we assume that the malicious clients can collude and perform coordinated attacks against the global model.
In this paper, we focus on data-poisoning attacks, which constitute the most realistic type of attack for cross-silo {\FL} \cite{Shej21}.%

\section{\NAME: group testing for {\FL}\\ with secure aggregation}\label{sec:FedGT}

We consider a population of $n$ clients, $\nm$ of which are malicious. We define the \emph{defective vector} $\bfd=(d_1, d_2, \dots, d_n)$ with entries representing whether a client $j$ is malicious ($d_j = 1$) or not ($d_j = 0$). %
It follows that $\sum_{j=1}^n d_j=\nm$.  Note that $\bfd$ is unknown, i.e., we do not know a priori which clients are malicious.

\begin{figure}[!t]
    \centering
    \includegraphics[width = .5\linewidth]{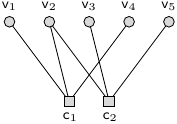}
    \caption{The bipartite graph of the matrix $\bfA$ in Example~\ref{ex:ExampleBipartiteGraep}. The circles represent variable nodes and  the squares represent check nodes.}
    \label{fig:gt}
    \vspace{-3ex}
\end{figure}

Borrowing ideas from group testing \cite{Dor43}, the $n$ clients are grouped into $m$ overlapping \emph{test groups}. We denote by $\calP_i$ the set of client indices belonging to test group $i\in[m]$, i.e., if client~$j$ is a member of test group $i$, then $j \in \calP_i$.  
\begin{definition}[Assignment matrix]
The assignment of clients to test groups can be described by an assignment matrix $\bfA=(a_{i,j})$, $i\in[m]$, $j\in[n]$, where $a_{i,j}=1$ if client $j$ participates in test group $i$ and $a_{i,j}=0$ otherwise.
\end{definition}

The assignment of clients to test groups, i.e., matrix $\bfA$, can also be
conveniently represented by a bipartite graph consisting of $n$ \emph{variable nodes} (VNs) $\msv_1,\ldots,\msv_n$  corresponding to the $n$ clients, and $m$ \emph{constraint nodes} (CNs) $\msc_1,\ldots,\msc_m$, corresponding to the $m$ test groups. An edge between VN $\msv_j$ and CN $\msc_i$ is then drawn if client $j$ participates in test group $i$, i.e., if $a_{i,j}=1$. 
Matrix $\bfA$\textemdash and hence the corresponding bipartite graph\textemdash is a design choice that may be decided offline, analogous to the model architecture, and shared with the clients for transparency.

\begin{example}
\label{ex:ExampleBipartiteGraep}
The bipartite graph corresponding to a scenario with $5$ clients and $2$ test groups with assignment matrix
\begin{align*}
    \bfA = \begin{pmatrix}
    1 & 1 & 0 & 1 & 0 \\
    0 & 1 & 1 & 0 & 1
\end{pmatrix}
\end{align*}
is depicted in Fig.~\ref{fig:gt}.
\end{example}

In FedGT, for each test group, a secure aggregation mechanism is employed to reveal only the aggregate of the client models in the test group to the server. 
Let $\bfu_i$, $i\in[m]$, be the aggregate model of test group $i$. 
For each test group $i$,
the central server applies a binary test on the corresponding aggregate model, $\mathsf{t}: \bfu_i \rightarrow \{0,1\}$. %
Let $t_i=\mathsf{t}(\bfu_i)\in\{0,1\}$ be the result of the test for test group $i$, where $t_i=1$ if the test is positive, i.e., there is at least a malicious client in the group, and $t_i=0$ if the test is negative, i.e., no malicious clients are in the group. 
We collect the result of the $m$ tests into the binary vector $\bft = (t_1, t_2, \dots, t_m)$.

We propose a suitable test in Section~\ref{sec:TestDesign}. However, we remark that the proposed framework is general and can be applied to any test on the test group aggregates.

We define the syndrome vector $\bfs=(s_1,\ldots,s_m)$, where $s_i=1$ if at least one client participating in test group $i$ is malicious and $s_i=0$ if no client participating in test group $i$ is malicious, i.e.,
\begin{align}
\label{eq:syndrome}
s_i=\bigvee_{j \in \calP_i} d_j,
\end{align}
and
\begin{align}
\label{eq:syndromevector}
\bfs=\bfd\vee \bfA\transpose\,.
\end{align}

For perfect (non-noisy) test results, it follows that $\bft=\bfs$. However, note that the result of a test may be erroneous, i.e., the result of the test may be $t_i=1$ even if no malicious clients are present (i.e., $s_i=0$) or $t_i=0$ even if malicious clients are present (i.e., (i.e., $s_i=1$). In general, the (noisy) test vector $\bft$ is statistically dependent on the syndrome vector $\bfs$ according to an (unknown) probability distribution $Q(\bft|\bfs)$.

Given the test results $\bft$ and the assignment matrix $\bfA$, the goal of \NAME{} is to identify the malicious clients, i.e., infer the defective vector $\bfd$. 
The design of the assignment matrix $\bfA$ and the corresponding inference problem is akin to an error-correcting coding problem, where the assignment matrix $\bfA$ can 
seen as the parity-check matrix of a code, and the inference problem corresponds to a decoding operation based on $\bfA$ and $\bft$. Thus, a suitable choice for $\bfA$ is the parity-check matrix of a powerful error-correcting code, i.e., with good distance properties. Furthermore, $\bfd$ can be inferred by applying conventional decoding techniques. 
We denote by $\hbfd=(\hd_1,\ldots,\hd_n)$ the estimated defective vector provided by the decoding operation, and define $\hat{\calM}=\{i:\hd_i=1\}$. 
Once $\hbfd$ has been obtained, clients $i\in\hat{\calM}$ are excluded from the training and the server aggregates the models of the remaining\textemdash flagged non-malicious\textemdash clients by means of secure aggregation.

The performance of FedGT, measured in terms of the utility of the model, is affected by two quantities: the misdetection probability, i.e., the probability that a malicious client is flagged as non-malicious, and the false-alarm probability, i.e., the probability that a non-malicious client is flagged as malicious, defined as\footnote{As in other works, we use the normalization factor $1/n$. Hence,  $P_{\msMD}$ and $P_{\msFA}$ are not strictly probabilities, as their values do not lie between $0$ and $1$.}%
\begin{align*}
P_{\msMD}&\triangleq\frac{1}{n}\sum_{i=1}^n \Pr(\hd_i=0|d_i=1)\,, \\
P_{\msFA}&\triangleq\frac{1}{n}\sum_{i=1}^n \Pr(\hd_i=1|d_i=0)\,.
\end{align*}
A high misdetection probability will result in many malicious clients poisoning the global model, hence yielding poor utility, while a high false-alarm probability will result in excluding many non-malicious clients from the training, thereby also impairing the utility. The misdetection and false-alarm probabilities  depend in turn on the assignment matrix $\bfA$, the decoding strategy, and the nature of the test performed. 
We discuss the decoding strategy to estimate $\bfd$ in Section~\ref{sec:Decoding}. 

\subsection{Privacy-Security Trade-off}
\label{subsec:trade_off}

Vanilla {\FL} \cite{McM17} and  {\FL} with \emph{full} secure aggregation \cite{Bon17} can be seen as the two extreme cases of FedGT, corresponding to $n$ (non-overlapping) groups and a single group with $n$ clients, respectively.
In vanilla {\FL}, tests on individual models can be conducted, facilitating the identification of malicious clients. However, this comes at the expense of clients' privacy. In contrast, full secure aggregation provides privacy by enabling the server to observe only the aggregate of the $n$ models, but it does not permit the identification of malicious clients. 

Under {\NAME}, the server observes $m$ aggregated models $\bfu_1\ldots,\bfu_m$, with $\bfu_i = \sum_{j=1}^{n} a_{i,j}\bfc_j$ and $\bfc_j$ being the local model of client $j$.  The privacy of the clients increases with the number of models aggregated~\cite{Elk22}. Hence, {\NAME} trades privacy for providing security, i.e., identification of malicious clients. 
Furthermore, there might be additional privacy loss due to the aggregates being from overlapping groups. 
This loss depends on the assignment matrix $\bfA$ and is agnostic to the number of malicious clients participating in the training. The privacy of FedGT is given in the following proposition.

\begin{proposition}\label{prop:privacy}
Let the assignment of clients to test groups be defined by assignment matrix $\bfA$ and let $r$ be the smallest non-zero Hamming weight of the vectors in the row span of $\bfA$ (in the coding theory jargon, the minimum Hamming distance of the code generated by $\bfA$ as its generator matrix). Then  FedGT achieves the same privacy as a secure aggregation scheme with $r$ clients. 
\end{proposition}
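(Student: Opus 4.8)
The plan is to characterize exactly what the honest-but-curious server can learn about the individual client models from its observations, and then match this to the privacy guarantee of plain secure aggregation over $r$ clients. Stacking the local models as $\bfc=(\bfc_1,\dots,\bfc_n)$, the server's view is the collection of group aggregates $\bfu_i=\sum_{j}a_{i,j}\bfc_j$, i.e.\ $\bfu=\bfA\bfc$. First I would argue that, because each $\bfu_i$ is produced by a secure aggregation primitive, the only information the server obtains about $\bfc$ beyond the individual $\bfu_i$ is what it can derive from them by linear operations: forming $\sum_i \lambda_i\bfu_i=\bfw\transpose\bfc$ with $\bfw=\bm{\lambda}\transpose\bfA$ shows that the set of linear functionals of $\bfc$ available to the server is precisely $\{\bfw\transpose\bfc:\bfw\in\rowspan(\bfA)\}$, i.e.\ indexed by the codewords of the code $\mathcal{C}$ generated by $\bfA$.

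Next I would interpret each such functional as a secure aggregate. A codeword $\bfw\in\mathcal{C}$ yields $\bfw\transpose\bfc=\sum_{j\in\mathrm{supp}(\bfw)}w_j\bfc_j$, a weighted sum over exactly the $|\mathrm{supp}(\bfw)|$ clients in the support of $\bfw$: the models outside the support are untouched, and those inside cannot be separated from one another using this functional alone. This is exactly the privacy afforded by secure aggregation over a group of $|\mathrm{supp}(\bfw)|=\mathrm{wt}(\bfw)$ clients. Consequently the privacy level of {\NAME} is governed by the smallest such group the server can realize, namely by $\min_{\bfw\in\mathcal{C}\setminus\{0\}}\mathrm{wt}(\bfw)=r$, the minimum distance of $\mathcal{C}$.

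To conclude, I would establish two matching bounds. For the converse (privacy at least that of SA with $r$ clients), every nonzero $\bfw\in\mathcal{C}$ has $\mathrm{wt}(\bfw)\ge r$ by definition of $r$, so no quantity the server can compute mixes fewer than $r$ models; equivalently, using $\rowspan(\bfA)=(\ker\bfA)^\perp$, no vector of weight below $r$ (in particular no basis direction $\bfe_j$) lies in $\mathcal{C}$, so each client's model is always aggregated with at least $r-1$ others. For achievability (privacy no better than SA with $r$ clients), there exists a codeword $\bfw^\star$ of weight exactly $r$, and the corresponding aggregate $(\bfw^\star)\transpose\bfc$ is a sum over exactly $r$ clients that the server does learn. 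Combining the two, {\NAME} offers precisely the privacy of secure aggregation with $r$ clients.

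The main obstacle I anticipate is the reduction step: rigorously arguing that the server's knowledge of $\bfc$ is captured by the row span and nothing more. The forward inclusion (any linear combination of the $\bfu_i$ lies in the row span) is immediate, but the reverse\textemdash that the server gains no further information about individual models\textemdash rests on the security guarantee of the underlying secure aggregation primitive together with an information-theoretic indistinguishability argument: two profiles $\bfc,\bfc'$ with $\bfA\bfc=\bfA\bfc'$, i.e.\ differing by an element of $\ker\bfA$, are indistinguishable to the server, so its residual uncertainty is exactly $\ker\bfA$ and the resolvable functionals are exactly $(\ker\bfA)^\perp=\rowspan(\bfA)$. Care is also needed to take the row span, and hence the minimum-weight computation defining $r$, over the same arithmetic (finite ring or field) in which secure aggregation operates.
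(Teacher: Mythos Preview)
Your proposal is correct and follows essentially the same approach as the paper's proof: both argue that the server can only form linear combinations $\sum_i b_i\bfu_i=\bfw\transpose\bfc$ with $\bfw\in\rowspan(\bfA)$, and hence the minimal number of client models ever mixed in any recoverable aggregate is the minimum nonzero Hamming weight of the code generated by $\bfA$. Your write-up is in fact more thorough than the paper's (which does not explicitly separate the converse and achievability directions nor address the indistinguishability argument you flag), but the underlying idea is identical.
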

\begin{proof}
Due to the overlapping groups arising from matrix $\bfA$, there might exist a vector $\bfb \in \mathbb{R}^m$ such that $\sum_{i=1}^m b_i\bfu_i = \bfc_j$  for some $j \in [n]$, or equivalently 
$\bfb\bfA = \bfe_j$, where $\bfe_j$ is the $j$-th unit vector.
This will occur if $\bfe_j \in \rowspan(\bfA)$, where $\rowspan(\bfA)$ is the row span of $\bfA$. Generally speaking, for a subset $\calR \subset [n]$ of cardinality $r$, if $\sum_{\iota\in \calR} f_{\iota}\bfe_{\iota} \in \rowspan(\bfA)$ where $f_{\iota} \neq 0$, there exists a vector $\bfb'$ such that  
$\sum_{i=1}^m b'_i\bfu_i = \sum_{\iota \in \calR}f_{\iota}\bfc_{\iota}$.
Thus, we conclude that {\NAME} achieves the same privacy as a secure aggregation scheme with $r< n$ clients, where $r$ is the smallest non-zero cardinality of the subset $\calR$. In other words, $r$ is the smallest non-zero Hamming weight of the vectors in the row span of $\bfA$.%
\end{proof}

\subsection{The Choice of Assignment Matrix $\bfA$}\label{subsec:limits_nm}

The assignment matrix $\bfA$ should be carefully chosen to balance the trade-off between privacy and security: To improve the identification of malicious clients, one should choose $\bfA$ as the parity-check matrix of an error-correcting code with good distance properties, while to achieve a high privacy level,  $\bfA$ should correspond to the generator matrix of a code of large minimum Hamming distance.
Such codes are readily available, see \cite{Mac77}.

On the other hand, for FedGT to effectively detect malicious clients with a low probability of false alarm, it is essential that some group tests yield negative results: If all tests are positive, i.e., $\bft=\bf{1}$, FedGT will  flag all clients as malicious, resulting in the highest (unnormalized) probability of false alarm of $P_{\msFA}=\frac{n-\nm}{n}$.

As the number of malicious clients grows, the likelihood of observing only positive test outcomes, i.e.,  $\bft = \bf{1}$, increases.   {Furthermore}, the choice of $\bfA$ highly impacts the probability of having all groups contaminated. {More precisely, the probability of having all groups contaminated is fully determined by  $\bfA$ and $\nm$. For small matrices $\bfA$, this probability can be computed exactly, while for larger ones, it can be approximated using a Monte Carlo approach.}

{The assignment matrix $\bfA$ should  be chosen such that the probability  of having all groups contaminated is small (note that $\nm$ is out of control of the designer). Alternatively, one can impose a constraint on the probability of all groups being contaminated and find the assignment matrix $\bfA$ that supports the maximum number of malicious clients, $\nmmax$, such that this constraint is satisfied.} The value $\nmmax$ is intrinsic to $\bfA$ and can be obtained offline, as outlined next.

Consider {the best-case scenario of  noiseless tests, i.e., $\bft = \bfs$ and let the $\nm$ malicious clients  be assigned uniformly at random.  Let $S_i$ be the random variable corresponding to the syndrome of the $i$-th group (corresponding also to the test outcome of the $i$-th group for noiseless group testing) and $\bm{S}=(S_1,\ldots,S_m)$ (the corresponding realizations are defined in \eqref{eq:syndrome} and \eqref{eq:syndromevector}).}
Then, for a fixed assignment matrix $\bfA$, we can solve
\begin{equation}\label{eq:prop_hist}
\nmmax = \argmax_{\nm}\; \lbrace \Pr(\bm{S}=\bm{1} \lvert \rvnm = \nm) \leq \kappa\rbrace\,,
\end{equation}
where $N_{\mathsf{m}}$ is the random variable that represents the number of malicious clients and $\kappa$ denotes the probability constraint of all groups being contaminated. 
This approach provides a systematic procedure of identifying assignment matrices that are suitable for a given scenario.

\subsection{Test Design} 
\label{sec:TestDesign}

{FedGT can be applied to any test on the test group aggregates. However, it is essential to design an accurate test, as the performance of FedGT is impaired by the noisiness of the test. In this section, we propose a test that, as shown in the numerical results section, yields low error rate.}

To begin, we make the observation that the utility of an aggregated model tends to decrease as it gets contaminated by a larger number of poisoned models. Moreover, the work in~\cite{tolpegin2020data} shows that when using dimensionality reduction tools like principal component analysis (PCA), the local models of the malicious clients tend to cluster around similar values, even in the first component.\footnote{The first component  captures the highest variance of the original observations (models).} Empirically, we observed that the findings from~\cite{tolpegin2020data} apply also to our group testing scenario: the aggregated models of  test groups with the same number of malicious clients tend to cluster around the same value in the first component. 

Motivated by this observation, we propose a testing strategy in which we first cluster the test groups (i.e., the corresponding aggregated models) into clusters based on the number of malicious clients involved in the test group. Then, for each cluster, we  compute the average utility of the aggregated models using a small validation dataset at the server (a minor assumption as motivated in Section~\ref{sec:experiments}), and finally, we declare the result of the test for the test groups within the cluster with highest average utility as negative ($t=0$) and the result of the test  for all other test groups as positive ($t=1$). 
The details of the proposed testing strategy are outlined below.

Let $v_i$ denote a measured utility metric of the aggregated model of test group $i$ evaluated on the validation dataset, and let $\bfv = (v_1, v_2, \dots, v_m)$. Also, let $\pca_i$ be the first principal component representation of the aggregated model of test group $i$ evaluated on the fully-connected layer, and let $\vecpca =\left(\pca_1, \dots, \pca_m\right)$.
We form $m$ points $\points_i = \left(v_i, p_i\right)$ and cluster them using the $k$-means algorithm~\cite{Llo82}. 
Since $k$-means requires the number of clusters $\numclusters$, we compute the maximum possible number of clusters, 
\begin{equation}
    \label{eq:max_clusters}
    \numclustersmax = \min \left\{m,\; \max_{i \in [m]} \; \lvert \calP_i\lvert \, +\, 1 \right\} \,, 
\end{equation}
and perform $k$-means clustering for all $\numclusters \in [\numclustersmax]$.

The next step is to determine the optimal number of clusters. Two popular metrics for this purpose are the Silhouette score~\cite{Rou87} and the Dunn index~\cite{Dun74}. 
In scenarios like ours,  where the number of points is relatively small compared to the number of clusters, the Dunn index tends to perform better than the Silhouette score. However, the Dunn index is not effective at determining if the data should be clustered into a single cluster. To address these limitations, we propose a combined approach: first, we use the Silhouette score to assess whether the data should be clustered into one  or several clusters. If multiple clusters are indicated,  we then  use the Dunn index to determine the precise number of clusters.

For a data point $\points_i$ in cluster $\setpoints_u$, the Silhouette score is defined as 
\begin{equation*}
\silhouette_i = \begin{cases}
    \dfrac{\interdis_i - \intradis_i}{\max\{\intradis_i, \interdis_i\}}, & \lvert \setpoints_u \lvert > 1, \\[2ex]
    0, & \lvert \setpoints_u \lvert = 1,
\end{cases} %
\end{equation*}
where $\interdis_i$ is the smallest mean  distance of $\points_i$ to all points in any other cluster,  
\begin{equation*}
    \interdis_i = \min_{u' \neq u}\dfrac{1}{\lvert \setpoints_{u'} \lvert} \sum_{\points_j \in \setpoints_{u'}} \norm{\points_i - \points_j}_2^2,
\end{equation*}
and $\intradis_i$ is the mean  distance of $\points_i$ to all other points in the same cluster, 
\begin{equation*}
    \intradis_i = \dfrac{1}{\lvert \setpoints_u \lvert -1} \sum_{\points_j \in \setpoints_u, j \neq i}\norm{\points_i - \points_j}_2^2 \,.
\end{equation*}
{For each $k$, the  Silhouette score of the corresponding clustering result, denoted as $\silhouette^{(k)}$, is then  the average of the individual Silhouette scores, i.e.,
\begin{align*}
\silhouette^{(k)} = \frac{1}{m}\sum_{i \in [m]} \silhouette_i\,.
\end{align*}}
The Dunn index~\cite{Dun74} is defined as 
\begin{equation*}
    \dunn_k = \frac{\min\limits_{i\in[k],j\in[k]: i\neq j} \norm{\bar{\points}_i- \bar{\points}_j}^2_2}{\max\limits_{u \in [k]} \max\limits_{\points_{i}, \points_{j} \in \setpoints_u} \norm{\points_{i} - \points_{j}}^2_2  }\,,
\end{equation*}
where $\bar{\points}_i$ denotes the center point of cluster $\setpoints_i$.

Based on the Silhouette score and the Dunn index, we determine the number of clusters as
\begin{equation}
\label{eq:hatk}
    \numclustersest = \argmax_{i\in[k]}\{\dunn_i\} \mathbbm{1}\{\silhouette_{\mathrm{max}} \geq \silhouettethres \} + \mathbbm{1}\{\silhouette_{\mathrm{max}} < \silhouettethres \}\,,
\end{equation}
where $\silhouette_{\mathrm{max}} = \max_{k\in[k_{\max}]}\silhouette^{(k)}$.
That is, we first threshold the Silhouette score to decide whether the datapoints $\{\points_i\}$ should be clustered into  one or several clusters and, if multiple clusters are suggested, we identify the precise number of clusters using the Dunn index (selecting  the number of clusters that maximizes the Dunn index). 

We use the outcome of the clustering to determine the test result for each test group. Note that each cluster corresponds to a different number of malicious clients. In this paper, however, we consider binary test results, i.e., for a given test group the test is positive ($t=1$) if the test determines that there is at least one malicious client in the group and the test is negative ($t=0$) if there is none. Hence, we only need to distinguish between clusters corresponding to test groups with no malicious clients and clusters corresponding to test groups with malicious clients. To this aim, for every cluster $\setpoints_i$, $i\in[\numclustersest]$, we compute the average utility as $\Bar{v}_i = \dfrac{1}{\lvert \setpoints_i \lvert} \sum\limits_{j : \points_j \in \setpoints_i} v_j$ and apply the decision rule
\begin{align*}
    t_j = \begin{cases}
      0 &  j : \points_j \in \setpoints_\iota \\
      1 & j : \points_j \notin \setpoints_\iota
    \end{cases} \,,\qquad \text{where  }
    \iota = \argmax_{i \in [\numclustersest]} \Bar{v}_i\,,
\end{align*}
i.e., our test strategy flags all test groups within the cluster with highest utility as benign ($t=0$) and all other test groups as containing malicious clients ($t=1$).

\subsection{Communication Cost}
\label{sec:comm_cost}

\NAME{} introduces a communication overhead only in the round(s) where group testing is performed. 
Considering a single-round of \NAME{}, which is the setting of our experiments, the overall communication cost of the FL consists of: 
\begin{enumerate}
    \item {\bf Before group testing}: Number of communication rounds $\times$ communication complexity of secure aggregation with $n$ clients.
    \item {\bf Group testing round}: $m$  $\times$ communication complexity of secure aggregation with $\max_{j\in [m]} \lvert \mathcal{P}_j \lvert$ clients (maximum size of groups).
    \item {\bf After group testing}: Number of rounds $\times$ communication complexity of secure aggregation with $n-\lvert \hat{\calM} \lvert$ clients (number of clients classified as benign from the group testing).
\end{enumerate}
The overall communication cost heavily depends on the scheme used for secure aggregation. In \cite[Table I]{Jah22}, the communication cost for different secure aggregation schemes is tabulated. 
For example,  LightSecAgg~\cite{So22} has a total communication complexity per round of $\mathcal{O}(n(C+1))$, where $C$ is the model size.
In this case, over $K$ training rounds, \NAME{} yields a total communication complexity of $\mathcal{O}((C+1)(nK + m\max_{j\in [m]} \lvert \mathcal{P}_j \lvert)$ where we have assumed $\hat{\mathcal{M}} = \varnothing$, the worst-case scenario from a communication perspective.

\section{Decoding: inferring the defective vector $\bfd$}
\label{sec:Decoding}

Given the test results $\bft$ and the assignment matrix $\bfA$, 
\NAME{} estimates the defective vector $\bfd$.  In this section, we present two decoding strategies based on probabilistic decision metrics to estimate $\bfd$. 

\subsection{Strategy 1: Neyman-Pearson Based Inference}
\label{subsec:Neyman-Pearson}
In our first strategy, %
we consider optimal inference in a Neyman-Pearson sense, which prescribes for some $\Delta'>1$
\begin{align*}
\hd_i=\left\{\begin{array}{cl}0 & \text{if}\;\; \Pr(\bft|d_i=0)>\Pr(\bft|d_i=1)
\Delta'  \\ 1 & \text{if}\;\; \Pr(\bft|d_i=0)<\Pr(\bft|d_i=1)\Delta'   \end{array}\right.\,.
\end{align*}
The Neyman-Pearson criterion can be rewritten in terms of the log-likelihood ratio (LLR) $L_i=\log(\Pr(\bft|d_i=0)/\Pr(\bft|d_i=1))$ as
\begin{align}
\label{eq:NP}
\hd_i=\left\{\begin{array}{cl}0 & \text{if}\;\; L_i>
\Delta \\ 1 & \text{if}\;\; L_i<\Delta   \end{array}\right.\,,
\end{align}
where $\Delta=\log(\Delta')$.  Further, we can write the LLR $L_i$ as
\begin{align}
\label{eq:LLR}
L_i&=\log\left(\frac{\Pr(d_i=0|\bft)}{\Pr(d_i=1|\bft)}\right)-\log\left(\frac{\Pr(d_i=0)} {\Pr(d_i=1)}\right) \\
&=L_i^{\APP}-\log\left(\frac{1-\delta}{\delta}\right)\,,
\end{align}
where $\delta$ is the \emph{prevalence} of malicious clients in the population of $n$ clients, i.e., the probability of a client being malicious, $\delta=\Pr(d_i=1)$. In a frequentist approach to probability, $\delta=\nm/n$.
Using~\eqref{eq:LLR}, the Neyman-Pearson criterion in \eqref{eq:NP} can be rewritten in terms of the a posteriori LLR $L_i^{\APP}$ as
\begin{align}
\label{eq:NPAPP}
\hd_i=\left\{\begin{array}{cl}0 & \text{if}\;\; L_i^{\APP}>
\Lambda \\ 1 & \text{if}\;\; L_i^{\APP}<\Lambda   \end{array}\right.\,,
\end{align}
where 
\begin{align}
\label{eq:delta}
\Lambda=\Delta+\log\left(\frac{1-\delta}{\delta}\right)\,.
\end{align}

In general, if $\Lambda$ increases, then $P_{\msFA}$ increases and $P_{\msMD}$ decreases. Note that $\Lambda$ depends on the prevalence, i.e., the number of malicious clients $\nm$, {which  is in general not known. In the following, we provide the means to estimate $\nm$.}

\textbf{Estimation of the number of malicious clients.} {For a given $\nm\in [\nmmax] \cup \{0\}$, we consider all patterns of $\nm$ malicious clients and define  $Z$ as the random variable representing the number of zero syndromes, i.e.,  
\begin{equation}\label{eq:syn_sum}
     Z = \sum_{i=1}^m \mathbbm{1}\{S_i = 0\}\,, 
\end{equation} 
where $\mathbbm{1}\{\cdot\}$ is the indicator function. Also, define
\begin{equation}
     z = \sum_{i=1}^m \mathbbm{1}\{s_i = 0\}
\end{equation} 
and 
\begin{equation}
     \hat{z} = \sum_{i=1}^m \mathbbm{1}\{t_i = 0\}\,. 
\end{equation} 
Note that, for a noiseless test, $\hat{z} = z$.}

The decoder has the vector of test results $\bft$ at its disposal. This information can be used to estimate $\nm$ via the maximum likelihood criterion as  
\begin{equation}
\label{eq:z_cond_nm}
    \nmest = \argmax_{\nm}\; \Pr(Z = \hat{z} \lvert N_{\mathsf{m}}=\nm )\,.
\end{equation}
 The likelihood $\Pr(Z = \hat{z} \lvert N_{\mathsf{m}}=\nm )$ can be computed exactly for small enough assignment matrices $\bfA$. Note that the accuracy of the estimate is expected to deteriorate with increasing test noise.

We use the estimate $\nmest$  to estimate the prevalence as $\hat{\delta}=\nmest/n$. The estimated prevalence $\hat{\delta}$ can then be used in \eqref{eq:delta} to obtain $\Lambda$.
However, one must still choose  $\Delta$.
To this end, we consider an ideal setting, i.e., $\bft = \bfs$ and $\nmest=\nm$, and find
\begin{equation}\label{eq:delta_est}
    \hat{\Delta}(\nm) = \argmin_{\Delta} \{ \mathbb{E}[\beta P_{\msMD} + (1-\beta)P_{\msFA}]\}, \nm \in [\nmmax]\,,%
\end{equation}
where $\beta \in[0,1]$ weights between false alarm and misdetection and their dependency on $\Delta$ is implicit.
The expectation is with respect to the $\nm$ malicious clients being sampled uniformly at random and can be computed via Monte-Carlo estimation.
Notably,~\eqref{eq:delta_est} can be solved offline.
During decoding, we set $\Lambda = \hat{\Delta}(\nmest) + \log\left(\frac{1-\Hat{\delta}}{\Hat{\delta}}\right)$.

Hereafter, we refer to FedGT with decoding strategy $1$ as \NAME{}-$\Delta$.
We remark that the number of clients flagged as malicious by FedGT-$\Delta$ may differ from the estimated value $\nmest$, i.e.,  $w_\mathsf{H}(\hat{\bfd})$ is not necessarily equal to $\nmest$.

\subsection{Strategy 2: Flagging $\nmest$ Clients}\label{subsec:nmest}

We propose an alternative strategy to  infer the defective vector $\hat{\bfd}$ by  relying entirely on the estimated number of malicious clients $\nmest$. 
This strategy is based on the observation that the a posteriori LLRs $L_i^{\APP}$ indicate the likelihood of a client being benign (see~\eqref{eq:LLR}), with higher values indicating to a more confident guess. 
Accordingly, Strategy 2 declares   the $\nmest$ clients with smallest $L_i^{\APP}$ as malicious and the remaining clients as benign. 

Let $\bfL^{\APP} = \left(L_1^{\APP}, L_2^{\APP}, \dots, L_n^{\APP}\right)$ be the vector  containing the a posteriori LLRs for all clients and  $\widetilde{\bfL}^{\APP} = \left(L_{i_1}^{\APP}, L_{i_2}^{\APP},\dots, L_{i_n}^{\APP}\right)$ be a sorted version of $\bfL^{\APP}$ with LLRs ordered in ascending order, i.e., $L_{i_j}^{\APP}\ge L_{i_k}^{\APP}$ for $j>k$. 
For an estimated number of malicious clients $\nmest$, we define the decision rule as
\begin{align}
\label{eq:decision_nmest}
\hd_i=\left\{\begin{array}{cl}1 & \text{if}\;\; i \in \left\{i_1, i_2, \dots, i_{\nmest}\right\} \\ 0 & \text{otherwise}\end{array}\right.\,,
\end{align}
where $\{i_1, i_2, \dots, i_{\nmest}\}$ is the set of the indices of the $\nmest$ smallest elements in $\bfL^{\APP}$. 
Note that using this decision strategy, contrary to FedGT-$\Delta$, the number of nonzero entries in $\hat{\bfd}$ is always  $\nmest$, i.e., $w_\mathsf{H}(\hat{\bfd})=\nmest$.
Henceforth, we refer to FedGT with decoding strategy $2$ as \NAME{}-$\nmest$.

Both \NAME{}-$\Delta$ and \NAME{}-$\nmest$, require the a posteriori LLRs $L_i^{\APP}$. 
For not-too-large matrices $\bfA$, they can be computed efficiently  via the forward-backward algorithm  \cite{Bah74},  which exploits  the trellis representation of the assignment matrix $\bfA$. For large matrices $\bfA$, the computation of the a posteriori LLRs is not feasible, and one needs to resort to suboptimal decoding strategies, such as belief propagation \cite{Ksc01}.

Given our focus on the cross-silo setting,  where the number of clients is limited, we  next present how to obtain the a posteriori LLRs for this setting using the trellis representation of the assignment matrix $\bfA$ and the forward-backward algorithm.
In Section~\ref{sec:Trellis}, we describe how to obtain the trellis diagram for a given assignment matrix $\bfA$, and in Section~\ref{sec:BCJR}, we discuss the forward-backward algorithm to compute the a posteriori LLRs to infer $\bfd$.

\subsection{Trellis Representation of Assignment Matrix $\bfA$}
\label{sec:Trellis}

\begin{figure}[t]
    \centering
    \resizebox{!}{.37\linewidth}{
    \begin{tikzpicture}
\tikzstyle{mynode}=[draw,circle,minimum width=1cm,inner sep = 0cm, font = \huge];

\matrix (m) [column sep=25mm, row sep=1cm, ampersand replacement=\&]
{
\node{}; \& [-30mm] 
\node(m01){\huge $\ell=0$}; \& 
\node(m02){\huge $\ell = 1$}; \& 
\node(m03){\huge $\ell = 2$}; \& 
\node(m04){\huge $\ell = 3$}; \& 
\node(m05){\huge $\ell = 4$}; \& 
\node(m06){\huge $\ell = 5$}; \\[-0.5cm]
\node{}; \&
\node[mynode] (m11) {$0$}; \&
\node[mynode] (m12) {$0$}; \&
\node[mynode] (m13) {$0$}; \&
\node[mynode] (m14) {$0$}; \&
\node[mynode] (m15) {$0$}; \&
\node[mynode] (m16) {$0$}; \\
\node{}; \& \&
\node[mynode] (m22) {$1$}; \&
\node[mynode] (m23) {$1$}; \&
\node[mynode] (m24) {$1$}; \& 
\node[mynode] (m25) {$1$}; \&
\node[mynode] (m26) {$1$}; \&
\& \\
\node{}; \& \& \& \&
\node[mynode] (m34) {$2$}; \&
\node[mynode] (m35) {$2$}; \&
\node[mynode] (m36) {$2$}; \\
\node{}; \& \& \&
\node[mynode] (m43) {$3$}; \&
\node[mynode] (m44) {$3$}; \&
\node[mynode] (m45) {$3$}; \&
\node[mynode] (m46) {$3$};  \\
};
\path[draw,dashed] (m11)  --  (m12) --  (m13) --  (m14) --  (m15)-- (m16);
\draw[dashed] (m22) -- (m23);
\draw[dashed] (m23) -- (m24);
\draw[dashed] (m24) to[bend left] (m25);
\draw[dashed] (m25) -- (m26);
\draw[dashed] (m43) to[bend left] (m44);
\draw[dashed] (m44) to[bend left] (m45);
\draw[dashed] (m45) to[bend left] (m46);
\draw[dashed](m34) -- (m35) ;
\draw[dashed] (m35) to[bend left] (m36);
\draw (m35) to[bend right] (m36);
\draw (m15) -- (m36);
\draw (m25) -- (m46);
\draw (m22) -- (m43);
\draw (m23) -- (m44);
\draw (m11) -- (m22);
\draw (m12) -- (m43);
\draw (m13) -- (m34);
\draw (m14) -- (m25);
\draw (m24) to[bend right] (m25);
\draw (m34) -- (m45);
\draw (m43) to[bend right] (m44);
\draw (m44) to[bend right] (m45);
\draw (m45) to[bend right] (m46);
\end{tikzpicture}}
    \caption{Trellis representation of matrix $\bfA$ in Example~\ref{ex:ExampleBipartiteGraep}. The dashed edges correspond to the symbol ``0'', while the solid edges correspond to the symbol ``1''.}
    \label{fig:trellis_A1}
    \vspace{-3ex}
\end{figure}

In this section, we describe the trellis representation corresponding to assignment matrix $\bfA$, which can be used to compute the a posteriori LLRs as described in Section~\ref{sec:BCJR}. The trellis representation was originally introduced for linear block codes in 
\cite{Wol78} and applied to group testing in \cite{Liv21}.

For a given defective vector $\tbfd$ (not necessarily the true one),  define the \emph{syndrome vector} $\tbfs=(\ts_1,\ldots,\ts_n)$, where $\ts_i$  is given by
$\ts_i=\bigvee_{j \in \calP_i} \td_j$.
The syndrome vector can  be written as a function of the defective vector $\tbfd$ and the assignment matrix as $\tbfs=\tbfd\vee \bfA\transpose$.
 Note that several defective vectors are compatible with a given syndrome $\tbfs$. Let $\calD$ be the set of all possible defective vectors, i.e., all binary tuples of length $n$. We denote by $\calD_{\bfs}$ the set of defective vectors  compatible with syndrome vector $\bfs$, i.e., $\calD_{\tbfs}=\{\tbfd\in\calD:\tbfd\vee\bfA\transpose=\tbfs\}$. 

Let $\bfa_j$ be the  $j$-th column of matrix $\bfA$.  %
The syndrome corresponding to defective vector $\tbfd$ can then be rewritten as $\tbfs=\bigvee_{i=1}^n(\td_i \land \bfa_i\transpose)$. This equation naturally leads to a trellis representation of the assignment matrix $\bfA$ as explained next. A trellis is 
a graphical way to represent matrix $\bfA$, consisting of a collection of nodes connected by  edges. %
The trellis  corresponding to   matrix $\bfA$  in Example~\ref{ex:ExampleBipartiteGraep} is depicted in Fig.~\ref{fig:trellis_A1}. Horizontally, the nodes, called trellis states, are grouped into sets indexed by parameter $\ell\in\{0,\ldots,n\}$, referred to as the trellis depth.

Let $\tbfs_\ell$ be the \emph{partial} syndrome vector at trellis depth $\ell\in[n]$ corresponding to $\tbfd$, given as $\tbfs_\ell=\bigvee_{i=1}^\ell(\td_i \land \bfa_i\transpose)$. It is easy to see that $\tbfs_\ell$ can be obtained from $\tbfs_{\ell-1}$ as $\tbfs_\ell=\tbfs_{\ell-1}\vee(\td_\ell \land \bfa_\ell\transpose)$, 
with $\tbfs_0$ being the all-zero vector. The trellis representation is such that each state in the trellis represents a particular partial syndrome. The trellis is then constructed as follows: At trellis depth $\ell=0$ there is a single trellis state corresponding to $\tbfs_0$. At trellis depth $\ell\in[n]$, the trellis states correspond to all possible partial syndrome vectors $\tbfs_\ell$  for all possible partial syndrome vectors $(\td_1,\ldots,\td_\ell)$, with $\td_i\in\{0,1\}$. For example,  
at trellis depth $\ell=1$ there are 
only two trellis states, corresponding to partial syndromes $0\land \bfa_1\transpose=(0,\ldots,0)$ and $1\land \bfa_1\transpose=(a_{1,1},\ldots,a_{1,m})$, i.e., for $\td_1=0$ and $\td_1=1$, respectively. Note that at trellis depth $\ell=n$, there are $2^m$ trellis states, corresponding to all possible syndromes $\tbfs$. For simplicity, we label the trellis state corresponding to partial syndrome vector $\tbfs_\ell=(s_{\ell,1},\ldots,s_{\ell,m})$ by its decimal representation $\sum_{i=1}^m \ts_{\ell,i}2^{i-1}$. Finally, an edge from the node at trellis depth $\ell$ corresponding to partial syndrome $\tbfs_\ell$ to the node at trellis depth $\ell+1$ corresponding to partial syndrome $\tbfs_{\ell+1}$ is drawn if $\tbfs_{\ell+1}=\tbfs_{\ell}\vee(\td_{\ell+1} \land \bfa_{\ell+1}\transpose)$, with $\td_{\ell+1}\in\{0,1\}$. The edge is labeled by the value of $\td_{\ell+1}$ enabling the transition between $\tbfs_\ell$ and $\tbfs_{\ell+1}$.
\begin{example}
For the trellis of Fig.~\ref{fig:trellis_A1}, corresponding to the assignment matrix $\bfA$ in Example~\ref{ex:ExampleBipartiteGraep} with $n=5$ nodes and $m=2$ tests, the number of trellis states at trellis depth $\ell=5$ is $2^2=4$, i.e., all length-$2$ binary vectors (in decimal notation $\{0,1,2,3\}$). At trellis depth $\ell=2$, there are three states, corresponding to all possible partial syndromes $\tbfs=\bigvee_{i=1}^2(\td_i \land \bfa_i\transpose)$, i.e., all possible (binary) linear combinations of the two first columns of matrix $\bfA$, resulting in states $(0,0)\vee(0,0)=(0,0)=0$, $(0,0)\vee(1,1)=(1,1)=3$, $(1,0)\vee(0,0)=(1,0)=1$, and $(1,0)\vee(1,1)=(1,1)=3$. %
\end{example}

The trellis graphically represents all possible defective vectors $\tbfd$ and their connection to the syndromes $\tbfs$ via the assignment matrix $\bfA$. In particular, the paths along the trellis originating in the all-zero state at trellis depth $\ell=0$ and ending in trellis state $\tbfs$ at trellis depth $\ell=n$ correspond to all defective vectors $\tbfd$ compatible with syndrome $\tbfs$.

\subsection{The Forward-Backward Algorithm}
\label{sec:BCJR}

The a posteriori LLRs can be computed efficiently using the trellis representation of matrix $\bfA$ introduced in the previous subsection via the forward-backward algorithm \cite{Bah74}. 
Let $\calE_\ell^{(0)}$ and $\calE_\ell^{(1)}$ be the set of edges connecting trellis states at trellis depth $\ell-1$ with states at trellis depth $\ell$ labeled by  $\td_\ell=0$ and  $\td_\ell=1$, respectively. The a posteriori LLRs $L_\ell^{\APP}$ can be computed as 
\begin{align}
\label{eq:APPLLR}
L_\ell^{\APP}&=\log \sum_{(\sigma',\sigma)\in\calE_\ell^{(0)}}\alpha_{\ell-1}(\sigma')\gamma(\sigma',\sigma)\beta_\ell(\sigma) \nonumber \\
&\qquad
-\log \sum_{(\sigma',\sigma)\in\calE_\ell^{(1)}}\alpha_{\ell-1}(\sigma')\gamma(\sigma',\sigma)\beta_\ell(\sigma)\,,
\end{align}
where $(\sigma',\sigma)$ denotes an edge connecting state $\sigma'$ at trellis depth $\ell-1$ with state $\sigma$
at trellis depth $\ell$.

The quantities $\alpha_{\ell-1}(\sigma')$ and $\beta_\ell(\sigma)$ are called the forward and backward metrics, respectively, and can be computed using the recursions
\begin{align*}
\alpha_{\ell}(\sigma)&=\sum_{\sigma'}\alpha_{\ell-1}(\sigma')\gamma_\ell(\sigma',\sigma)\,, \\
\beta_{\ell-1}(\sigma')&=\sum_{\sigma}\beta_{\ell}(\sigma)\gamma_\ell(\sigma',\sigma)\,,
\end{align*}
with initialization of the forward recursion $\alpha_{0}(0)=1$ and of the backward recursion $\beta_{n}(\sigma)=Q(\bft|\bfs(\sigma))$, where $\bfs(\sigma)$ is the syndrome corresponding to trellis state $\sigma$.
The quantity $\gamma_\ell(\sigma',\sigma)$ is called the branch metric and is given by
\begin{align*}
\gamma_\ell(\sigma',\sigma)=\left\{\begin{array}{cl}1-\delta & \text{if}\;  (\sigma',\sigma)\in\calE_\ell^{(0)} \\ \delta & \text{if}\;  (\sigma',\sigma)\in\calE_\ell^{(1)} \end{array}\right.\,.
\end{align*}

The a posteriori LLRs computed via \eqref{eq:APPLLR} are then used to make decisions on $\{d_i\}$ according to \eqref{eq:NPAPP}. 

\subsection{\NAME{} Hyperparameters} \label{subsec:FedGT_hyper}%

As discussed in the previous section, the decoder requires the distribution $Q(\bft|\bfs)$ and the prevalence $\delta${, which are in general unknown. For the prevalence, we use the estimate $\hat{\delta} = \nicefrac{\nmest}{n}$ as outlined in Section~\ref{subsec:Neyman-Pearson}\label{subsec:Neyman-Pearson}. 
(For the case where the estimated number of malicious clients is zero,  $\nmest = 0$, we do not run the decoder and flag all clients as benign).} %
{On the other hand, the distribution $Q(\bft|\bfs)$, i.e., the noisiness of the test, is test-dependent and hard to estimate. Here, we assume a simple model for $Q(\bft|\bfs)$ which, as shown in the experiments section (Section~\ref{sec:experiments}), yields excellent results. In particular, we assume  that   $Q(\bft|\bfs)$  factorizes as $Q(\bft|\bfs)=\prod_{i=1}^m Q(t_i\lvert s_i)$  and model $Q(t_i\lvert s_i)$ as a binary symmetric channel (BSC), i.e., 
$Q(t_i\lvert s_i)=1-p$ if $t_i=s_i$ and $Q(t_i\lvert s_i)=p$ if $t_i\neq s_i$. In words, we assume that, for each group, the result of the test  is erroneous with probability $p$.}

Our model for $Q(\bft|\bfs)$ requires a single parameter, $p$. Using the correct value of $p$ improves the decoder performance in terms of  misdetection and false-alarm probabilities. However, even with the proposed simple BSC model, accurately estimating $p$ is challenging. Therefore, we arbitrarily select a value for $p$ and demonstrate that our decoder remains robust to this choice (see Section~\ref{sec:rob_p}). Specifically, we choose a small value for $p$ (as a relatively accurate test is preferred), namely $p=0.05$.

FedGT-$\Delta$ also requires choosing parameter  $\beta$ (see in~\eqref{eq:delta_est}), which balances the misdetection and false-alarm probabilities. The impact of a higher false alarm or misdetection probability depends on the scenario. For instance, when facing a powerful attack, a near-zero misdetection probability is preferable. Conversely, in heterogeneous settings, a low false alarm is crucial to avoid penalizing correct and unique data points. If prior knowledge of the scenario is available, one can set $\beta <0.5$ to empashize lowering the false-alarm probability or $\beta > 0.5$ to prioritize reducing the misdetection probability. Here, we assume no prior knowledge and set $\beta=0.5$, meaning we weight  misdetections and false alarms equally. 

Overall, since we fix $p$ and (for FedGT-$\Delta$) $\beta$ independently of the dataset and the nature of the attack, FedGT requires no hyperparameter tuning.

\section{Experiments}
\label{sec:experiments}

\subsection{Setup}
\label{subsec:Hyperparameters}
We consider a cross-silo scenario with $n=15$ clients (all  participating in each training round) out of which $\nm$ are malicious. 
In Section~\ref{sec:more_devices}, we also provide results for $n=30$ clients. 
We remark that these numbers are aligned with current cross-silo applications~\cite{Ogier2022flamby, heyndrickx2023melloddy, Shej21}.
The goal of the server is to prevent an attack by identifying the malicious clients and exclude their models from the global aggregation.
The experiments are conducted for image classification problems on the MNIST~\cite{MNIST}, CIFAR-10~\cite{He16}, and ISIC2019~\cite{Codella} datasets for which we rely on a single-layer neural network, a ResNet-18~\cite{Kri09}, and an Efficientnet-B0 pretrained on Imagenet~\cite{tan2019efficientnet}, respectively.

Similar to previous works~\cite{Mallah21,nguyen22,Pan20, Cao20, Park21}, we assume that the server has a small validation dataset at its disposal to perform the group tests (the validation dataset is not used for training).
Such dataset is not required by FedGT, but is used here due to our choice for the tests in the experiments.
The validation dataset should contain data that are sampled from a distribution close to the underlying distribution of the (benign) clients' datasets, i.e., it should be a \emph{quasi-dataset}~\cite{Pan20, Mallah21}.
For the experiments, we create the validation dataset by randomly sampling $100$ data-points from the available data. As a result, the label distribution may not be uniform. 
For MNIST and CIFAR10, the remaining data points (of size $59900$ and $49900$) are split evenly at random among the 15 clients, resulting in homogenous data among the clients, and used for training. 
For ISIC2019, we follow~\cite{Ogier2022flamby} and randomly partition the dataset into a training and a test set consisting of $19859$ and $3388$ samples, respectively.
We then partition the training dataset into six parts according to the image acquisition system used to collect the images.
Finally, we iteratively split the largest partition in half until we have $15$ partitions.
This procedure results in a heterogeneous setting where both label distributions and number of samples differ among clients (for the details of the ISIC2019 experiments we refer the reader to the Appendix).

For MNIST, we use the cross-entropy loss and stochastic gradient descent with a learning rate of $0.01$, batch size of $64$, and number of local epochs equal to $1$.
For CIFAR-10, we use the cross-entropy loss and stochastic gradient descent with momentum and parameters taken from~\cite{McM17}: the learning rate is $0.05$, momentum is $0.9$, and the weight decay is $0.001$.
Furthermore, the batch size is set to $128$ and the number of local epochs is set to $5$.
For ISIC2019, we use the focal loss in~\cite{Lin2017focal} and stochastic gradient descent with a learning rate of $0.0005$, momentum of $0.9$, and weight decay equal to $0.0001$.
The batch size equals $64$ and the number of local epochs is set to $1$.
Furthermore, we use the same set of augmentations as in~\cite{Ogier2022flamby} to encourage generalization during the training.
The results presented are averaged over $10$, $5$, and $3$ runs for MNIST, CIFAR-10, and ISIC2019, respectively. 

For the experiments over ISIC2019, due to the heterogeneous client data (see Appendix), the identities of the malicious clients, i.e., the realizations of vector $\bfd$, significantly impact the results. Therefore, for $\nm >0$, we run the experiments $3$ times with different realizations of $\bfd$ but the same client data distribution. In particular, we evaluate three different scenarios: i) the very heterogeneous clients (clients $4$ and $10$) are not malicious; ii) only one of them is malicious, and iii) both of them are malicious.%

{In our experiments, we use the test strategy outlined in Section~\ref{sec:TestDesign} within FedGT. In particular,} for the experiments over MNIST and CIFAR-10, we use $\silhouettethres = 0.6$ (see \eqref{eq:hatk}) and due to the heterogeneity of ISIC2019, we use $\silhouettethres = 0$, i.e., the clustering solution is decided solely from the Dunn index.

We show the performance of FedGT using both FedGT-$\nmest$ and FedGT-$\Delta$.
We compare their performance to three benchmarks: ``no defense'', ``oracle'' and RFA~\cite{Pillutla22}. 
The no defense benchmark corresponds to plain {\FL} including all clients, i.e., disregarding some clients may be malicious, while
 the oracle is an ideal setting where the server knows the malicious clients and discards them.   
Note that RFA belongs to a short list of defense mechanisms that also provide privacy.%

To demonstrate the effectiveness of FedGT, we perform the group testing step only once during the training.
This constitutes the weakest version of our framework as the group testing may be performed in each round at the expense of increased communication cost (see Section~\ref{sec:comm_cost}).
In particular, for MNIST, we perform the group testing in the first round and for CIFAR-10 and ISIC2019, in the fifth round.
We pick as the assignment matrix a parity-check matrix of a BCH code~\cite{Bos60} of length $15$ and redundancy $8$, meaning that we create a group testing scheme where the $15$ clients are pooled into $8$ groups, each containing $4$ clients. This choice of $\bfA$ allows for $\nmmax = 5$, where $\kappa$ in~\eqref{eq:z_cond_nm} is set to $20\%$. 
Also, as discussed in Section~\ref{subsec:FedGT_hyper} we set $\beta=0.5$ in~\eqref{eq:delta_est} for FedGT-$\Delta$ (tuning $\beta$ may yield better performance, especially for the experiments over ISIC2019, but requires prior knowledge). 

For the considered setup, \NAME{} yields a communication overhead and privacy guarantee as follows.
\begin{itemize}
    \item \textbf{Communication cost.} Considering a secure aggregation scheme with linear communication complexity such as LightSecAgg~\cite{So22}, the communication cost of the group testing round is approximately $2\times$ the complexity of secure aggregation with 15 clients. 
    Compared to RFA, which requires $3\times$ communication cost of secure aggregation with 15 clients \textit{in each round}, \NAME{} yields a significantly reduced communication cost. 

\item \textbf{Privacy.} With our choice of assignment matrix,  {\NAME} guarantees the same level of privacy of full secure aggregation with $4$ clients. This stems from the property that any linear combination of the server's group aggregates leads to an aggregation involving no fewer than $4$ client models, as elucidated in Proposition~\ref{prop:privacy}.
\end{itemize}

\subsection{Robustness Toward the Crossover Probability $p$} \label{sec:rob_p}

The decoding strategy employed in FedGT-$\Delta$ requires selecting the optimal parameter $\Delta$, which, due to the trellis-based decoding approach, depends on the unknown crossover probability $p$. In our experiments, we set $p=0.05$. Next, we empirically demonstrate that \NAME{}-$\Delta$ is robust to a mismatch in the assumed $p$.

In Table~\ref{tab:robustness_p}, we present the value of the objective $0.5 P_{\msMD} + 0.5 P_{\msFA}$ for different values of $p$ and $\nm$ when $\Delta$ is obtained via~\eqref{eq:delta_est} and $p=0.05$ is believed to be the true value, i.e., we assess the impact of a mismatch in $p$.
It can be seen that the objective function is robust to a mismatch in $p$ for all $\nm\in[\nmmax]$.
Hence, $p$ may be chosen to hedge for the anticipated noise in the testing strategy and one does not have to be concerned about the impact of a mismatch on the choice of $\Delta$. 
\begin{table}[t]
\caption{Robustness of the objective, $0.5 P_{\msMD} + 0.5 P_{\msFA}$, with varying $p$ for a $\Delta$ obtained from~\eqref{eq:delta_est} with $p=5\%$.}
\vspace{-3ex}
\begin{center}
\resizebox{\linewidth}{!}{%
\begin{tabular}{cccccccccc}
\toprule
\diagbox{$\nm$}{$p$} & $1\%$ & $2.5\%$ & $5\%$ & $7.5\%$ & $10\%$ & $12.5\%$ & $15\%$ & $17.5\%$ & $20\%$ \\
\midrule
$1$ & 0  & 0 & 0  & 0& 0  & 0& 0  & 0 & 0\\
$2$ & 0.01 & 0.01 & 0.01 & 0.01 & 0.01 & 0.01 & 0.01 & 0.02 & 0.02 \\
$3$ & 0.07 & 0.07 & 0.07 & 0.07 & 0.07 & 0.07 & 0.06 & 0.06 & 0.06 \\
$4$ & 0.14 & 0.14 & 0.14 & 0.14 & 0.14 & 0.14 & 0.15 & 0.15 & 0.16 \\
$5$ & 0.15 &0.15 & 0.15 & 0.15 & 0.15 & 0.15 & 0.15 & 0.15 & 0.17 \\
\bottomrule
\end{tabular}
}
\end{center}
\label{tab:robustness_p}
\vspace{-4ex}
\end{table}

We observed that  FedGT-$\nmest$ demonstrates an even greater robustness to a mismatch in $p$. However, due to space constraints, we omit a similar table for FedGT-$\nmest$.

\subsection{Experimental Results for Targeted Attacks}
\label{subsec:targeted}

For targeted data-poisoning, we consider label-flipping attacks.
We refer to the attacked label as the source label and the resulting label after the flip as the target label.
For MNIST, we consider malicious clients to flip  source label $1$ into  target label $7$.
As such, the objective of the malicious clients is to cause the global model to misclassify $1$'s into $7$'s.
Similarly, for CIFAR-10, malicious clients change  source label $7$, i.e., horses, into  target label $4$, i.e., deers. %
For the ISIC2019 dataset,  malicious clients mislabel  source label 0, i.e., melanoma, into  target label 1, i.e., mole. Note that this attack has a significant medical impact, as the goal of the attacker is to force the model to classify cancer into non-cancer.
Since the adversary's goal  is not to deteriorate the global model but to make it misinterpret the source label as the target label, we adopt the \emph{attack accuracy} as the primary metric of interest. The attack accuracy is defined as the fraction of source labels  classified as the target label in the test dataset.
Moreover, since a successful defense mechanism should not compromise the overall utility of the model, we employ the accuracy on the test dataset as a secondary performance metric.
\begin{table*}[t]
\caption{Attack accuracy (ATT) and top-1 accuracy (ACC) measured after specified communication rounds for MNIST, CIFAR10, and ISIC2019 datasets. All entries are provided as mean and standard deviation with values in $\%$.}
\vspace{-3ex}
\begin{center}
\resizebox{\linewidth}{!}{%
\begin{tabular}{ccc|cc|cc|cc|cc}
\toprule
& \multicolumn{2}{c}{Oracle} &\multicolumn{2}{c}{RFA~\cite{Pillutla22}} &\multicolumn{2}{c}{\NAME{}-$\nmest$} &\multicolumn{2}{c}{\NAME{}-$\Delta$} &\multicolumn{2}{c}{No defense} \\ \cmidrule(lr){2-3} \cmidrule(lr){4-5} \cmidrule(lr){6-7} \cmidrule(lr){8-9} \cmidrule(lr){10-11}
$\nm$ & ATT $\downarrow$ & ACC $\uparrow$ & ATT $\downarrow$ & ACC $\uparrow$ & ATT $\downarrow$ & ACC $\uparrow$ & ATT $\downarrow$ & ACC $\uparrow$ & ATT $\downarrow$ & ACC $\uparrow$ \\
\midrule
\multicolumn{11}{c}{MNIST (10 communication rounds)} \\
\midrule
$0$ & $0.07 \pm 0.07$ & $90.32 \pm 0.09$        
& $0.07 \pm 0.07$ & $90.32 \pm 0.10$ & $0.08 \pm 0.06 $ & $90.18 \pm 0.10$ & $0.08 \pm 0.06$ & $90.18 \pm 0.11$ & $0.07 \pm 0.07$ & $90.32 \pm 0.09$ \\
$1$ & $0.04 \pm 0.04$ & $90.32 \pm 0.17$        & $0.10 \pm 0.07$ & $90.32 \pm 0.10$ & $0.05 \pm 0.06$ & $90.21 \pm 0.09$ & $0.05 \pm 0.06$ & $90.19 \pm 0.10$ & $0.15 \pm 0.04$ & $90.30 \pm 0.17$ \\
$2$ & $0.04 \pm 0.04$ & $90.33 \pm 0.19$        & $0.13 \pm 0.06$ & $90.31 \pm 0.12$ & $0.06 \pm 0.07$ & $90.17 \pm 0.13$ & $0.07 \pm 0.07$ & $90.16 \pm 0.12$ & $0.18 \pm 0.04$ & $90.23 \pm 0.15$ \\
$3$ & $0.04 \pm 0.04$ & $90.31 \pm 0.17$ & $0.15 \pm 0.04$ & $90.29 \pm 0.10$ & $0.06 \pm 0.07$ & $90.12 \pm 0.09$ & $0.07 \pm 0.07$ & $90.13 \pm 0.08$ & $0.36 \pm 0.13$ & $90.10 \pm 0.16$ \\
$4$ & $0.04 \pm 0.04$ & $90.32 \pm 0.16$ & $0.16 \pm 0.04$ & $90.29 \pm 0.09$ & $0.19 \pm 0.13$ & $90.05 \pm 0.13$ & $0.07 \pm 0.05$ & $90.11 \pm 0.11$ & $1.03 \pm 0.23$ & $89.89 \pm 0.19$ \\
$5$ & $0.04 \pm 0.04$ & $90.34 \pm 0.16$ & $0.17 \pm 0.03$ & $90.26 \pm 0.10$ & $1.53 \pm 2.75$ & $89.77 \pm 0.45$ & $0.07 \pm 0.05$ & $90.07 \pm 0.10$ & $3.15 \pm 0.40$ & $89.49 \pm 0.18$ \\
\midrule
\multicolumn{11}{c}{CIFAR10 (30 communication rounds)} \\
\midrule
$0$ & $4.10 \pm 0.27$ & $81.66 \pm 0.16$ & $3.86 \pm 0.27$ & $81.94 \pm 0.28$ & $4.12 \pm 0.28$ & $81.49 \pm 0.40$ & $4.28 \pm 0.49$ & $81.23 \pm 0.91$ & $4.10 \pm 0.27$ & $81.66 \pm 0.16$ \\
$1$ & $3.36 \pm 0.56$ & $81.69 \pm 0.22$ & $5.44 \pm 0.67$ & $81.65 \pm 0.20$ & $4.84 \pm 1.87$ & $81.07 \pm 0.73$ & $4.40 \pm 1.35$ & $80.41 \pm 2.22$ & $5.72 \pm 0.68$ & $81.45 \pm 0.06$ \\
$2$ & $4.10 \pm 0.83$ & $81.44 \pm 0.22$ & $7.74 \pm 1.84$ & $81.49 \pm 0.39$ & $4.82 \pm 2.55$ & $80.83 \pm 0.41$ & $4.54 \pm 1.75$ & $78.46 \pm 1.97$ & $9.62 \pm 1.72$ & $81.11 \pm 0.30$ \\
$3$ & $3.56 \pm 0.32$ & $81.13 \pm 0.32$ & $11.06 \pm 0.62$ & $81.03 \pm 0.21$ & $4.32 \pm 2.31$ & $80.82 \pm 0.43$ & $4.92 \pm 1.23$ & $79.01 \pm 2.22$ & $17.62 \pm 2.23$ & $80.12 \pm 0.55$ \\
$4$ & $3.94 \pm 1.07$ & $81.07 \pm 0.18$ & $16.92 \pm 3.07$ & $80.52 \pm 0.56$ & $10.7 \pm 5.53$ & $80.28 \pm 0.70$ & $4.90 \pm 1.09$ & $78.68 \pm 1.61$ & $26.42 \pm 2.18$ & $79.25 \pm 0.31$ \\
$5$ & $3.74 \pm 0.43$ & $80.54 \pm 0.11$ & $25.16 \pm 3.94$ & $79.67 \pm 0.37$ & $18.62 \pm 7.87$ & $79.54 \pm 0.76$ & $5.32 \pm 1.19$ & $76.53 \pm 0.47$ & $38.40 \pm 6.48$ & $78.12 \pm 0.64$ \\
\midrule
\multicolumn{11}{c}{ISIC2019 (40 communication rounds)} \\
\midrule
$0$ & $25.04 $ & $63.79$ & $21.72$ & $64.96$ & $16.09$ & $62.91$ & $15.92$ & $60.70$ & $25.87$ & $63.29$ \\
$1$ & $21.72 \pm 1.76$ & $63.14 \pm 0.24$ & $23.27 \pm 0.83$ & $63.24 \pm 1.61$ & $16.97 \pm 0.16$ & $63.28 \pm 0.73$ & $17.69 \pm 0.75$ & $62.17 \pm 1.45$ & $25.43 \pm 2.78$ & $62.00 \pm 0.61$ \\
$2$ & $21.23 \pm 2.51$ & $63.19 \pm 0.81$ & $24.71 \pm 1.86$ & $62.63 \pm 1.37$ & $18.79 \pm 1.96$ & $63.12 \pm 0.38$ & $19.07 \pm 1.64$ & $62.47 \pm 0.97$ & $26.70 \pm 3.16$ & $62.52 \pm 0.51$ \\
$3$ & $21.28 \pm 2.54$ & $62.16 \pm 1.48$ & $29.30 \pm 3.05$ & $62.63 \pm 0.21$ & $18.74 \pm 2.84$ & $62.89 \pm 0.75$ & $19.13 \pm 2.52$ & $58.47 \pm 4.05$ & $30.51 \pm 4.62$ & $61.92 \pm 0.68$ \\
$4$ & $20.18 \pm 2.13$ & $61.65 \pm 0.53$ & $31.29 \pm 2.25$ & $62.10 \pm 1.10$ & $18.57 \pm 1.88$ & $62.58 \pm 0.18$ & $19.90 \pm 3.18$ & $56.15 \pm 4.32$ & $34.11 \pm 2.55$ & $61.50 \pm 1.02$ \\
$5$ & $20.01 \pm 1.85$ & $61.01 \pm 0.40$ & $38.47 \pm 3.52$ & $61.73 \pm 1.15$ & $22.66 \pm 0.55$ & $61.41 \pm 0.19$ & $17.69 \pm 0.86$ & $54.90 \pm 2.85$ & $38.70 \pm 3.57$ & $60.30 \pm 1.40$ \\
\bottomrule
\end{tabular}
}
\end{center}
\label{tab:targeted}
\vspace{-4ex}
\end{table*}

For the utility metric adopted in the testing strategy (see Section~\ref{sec:TestDesign}), we consider the source label recall, i.e., the fraction of source labels  classified into the correct label,  to flag test groups containing malicious clients and perform PCA on the weights of the fully connected layer incoming to the source label. 
We remark that to identify a label under attack, one may simply monitor, e.g., the recall of each label in the different test groups, using the validation dataset. %
For a targeted attack, the test noisiness obtained from our experiments is $5.41\%$, $9.16\%$, and $19.53\%$ for MNIST, CIFAR-10, and ISIC2019, respectively (we recall that we used $p=0.05$ in all our experiments). At first glance, the test results for  ISIC2019 appear to be very noisy. However,  we note that, due to the high heterogeneity, some benign clients may actually harm the model due to their data distribution, even without containing  poisoned data. FedGT identifies some of these clients as malicious\textemdash thus yielding higher utility\textemdash, which explains the higher noisiness of the test results.

In Table~\ref{tab:targeted}, we give the attack accuracy and top-1 (or balanced) accuracy of \NAME{}-$\nmest$ and \NAME{}-$\Delta$. For comparison, we also provide results for  no defense,  oracle, and RFA~\cite{Pillutla22}. The results are shown as the mean and standard deviation in $\%$ obtained from a Monte-Carlo-based simulation approach. However, please note that the experiment over ISIC2019 for $\nm =0$ is performed only once, as we do not investigate  client data distribution other than the one depicted in the Appendix. %

For MNIST, we observe a modest impact of the label flip, even for $\nm =5$. Nevertheless, \NAME{}-$\Delta$ effectively mitigates the attack accuracy compared to no defense. Notably, it significantly outperforms RFA (which lacks the capability of identifying malicious clients and entails a much larger communication complexity) and performs close to the oracle. \NAME{}-$\nmest$ outperforms RFA for $1\le \nm\le 3$, but falls short for other values of $\nm$. 

For CIFAR10, the label flip attack has a significant impact, as can be seen from the no-defense attack accuracy, nearing $40$\% for $\nm=5$. 
Both versions of {\NAME} significantly outperform RFA in terms of attack accuracy for all $\nm\ge 1$, especially for larger values of $\nm$, with \NAME{}-$\Delta$ performing very close to the oracle. For example, for $\nm=5$, \NAME{}-$\nm$ and \NAME{}-$\Delta$  reduce the attack accuracy to $18.32\%$ and $5.32\%$, respectively, compared to $25.16\%$ RFA.
(We note that the pronounced reduction in attack accuracy by \NAME{}-$\Delta$ is achieved at the expense of a slight penalty in  accuracy for larger values of $\nm$).

For ISIC2019, RFA performs poorly, achieving only a small improvement in attack accuracy with respect to no defense (RFA is known to underperform for heterogeneous data across clients \cite{Li23}). 
Both versions of \NAME{} significantly diminish the attack accuracy, even outperform the oracle.
This can be explained from the data heterogeneity across clients where some clients, although not malicious, will be biased to output a given label, see, e.g., client $4$ and client $10$ in the Appendix.
Hence, due to the testing strategy, \NAME{} may identify benign clients exhibiting extreme heterogeneity as malicious to be removed from the training, ultimately reducing the attack accuracy and benefiting the overall utility of the global model.
Compared to the experiments on MNIST and CIFAR10, \NAME{}-$\nmest$ yields the strongest performance over the two metrics.
This is again attributed to heterogeneity, as \NAME{}-$\Delta$ removes too many clients, resulting in a high false-alarm probability. 

In Fig.~\ref{fig:att_acc}, we plot the attack accuracy of the label-flip attack over communication rounds for different values of $\nm$.
From the CIFAR10 and ISIC2019 experiments, the impact of the group testing is clearly seen with the attack accuracy rapidly dropping in round 5.

\subsection{Experimental Results for Untargeted Attacks}
\label{sec:untargeted}

Next, we consider a label permutation attack where malicious clients offset their data labels by $1$, i.e., $L_{\mathrm{new}}=(L_{\mathrm{old}}+1)\mod n_{\mathrm{c}}$, where $n_{\mathrm{c}}$ is the number of classes. 
The attack aims at deteriorating the classification accuracy over all labels, i.e., an attacker wants to lower the top-1 accuracy (MNIST and CIFAR-10) or the balanced accuracy (ISIC2019). 
For this reason, we use the top-1 accuracy (MNIST and CIFAR10) and the balanced accuracy (ISIC2019) on the test groups' aggregates as the qualitative metric in the testing strategy (see Section~\ref{subsec:Hyperparameters}) and perform PCA on the flattened weights of the entire fully connected layer. The balanced accuracy is the average recall per class, used to take into account class imbalances, as in the case of ISIC2019~\cite{Ogier2022flamby}. The test error probability is $2.29\%$, $4.58\%$, and $3.91\%$ for experiments over MNIST, CIFAR-10 and ISIC2019, respectively.
\begin{figure}[t!]
\centering
\resizebox{\linewidth}{!}{
    \input{./plots/targeted_acc_vs_comm.tex}}
    \vspace{-3ex}
    \caption{Average attack accuracy on the MNIST (row 1), CIFAR10 (row 2) and ISIC2019 (row 3) datasets for varying number of malicious clients. These results are obtained from FL experiments where $\nm$ clients out of $n =15$ total clients act as malicious by deploying a label-flip attack.}
    \label{fig:att_acc}
    \vspace{-3ex}
\end{figure}

In Table~\ref{tab:untargeted}, %
we show the top-1 accuracy versus $\nm$ for MNIST and CIFAR-10, and the balanced accuracy for ISIC2019.  The results are tabulated as the mean and standard deviation in $\%$.
For all cases, with no defense, a significant drop in accuracy is observed as the number of malicious clients grows.  
For MNIST, \NAME{}-$\Delta$ achieves similar performance to RFA and  oracle for all considered $\nm$. On the other hand, \NAME{}-$\nmest$ performs comparably to RFA for $\nm \leq 3$ but its performance declines for $\nm=4$ and $\nm=5$. For  CIFAR-10,  both versions of \NAME{} perform similar to RFA  for $\nm \leq 4$, but worse  for $\nm=5$. The robust performance of RFA is anticipated due to the untargeted attack rendering malicious client models significantly different from benign ones given the iid data distribution  across clients. Consequently, the geometric median\textemdash essentially performing a majority vote\textemdash assigns the malicious models a very low weight.

For ISIC2019, \NAME{}-$\nmest$ performs better than RFA for all values of $\nm$, except $\nm=1$. Moreover, for $\nm =0$ and $\nm = 2$, \NAME{}-$\nmest$ performs even better than oracle due to the heterogeneity of the data distribution among clients. This means that some clients can be flagged as malicious just because they deteriorate the utility of the global model due to their data samples. 
 \NAME{}-$\Delta$ performs better or similar to RFA for $\nm\le 2$ but worse than  RFA for $\nm \geq 3$. We note that this result is due to some realizations of defective vectors triggering the decoder to falsely flag as malicious clients with more homogeneous data distribution and resort to learning with clients with heterogeneous data.  %

In Fig.~\ref{fig:untarg_att_acc_comm}, we plot the top-1 accuracy for MNIST and CIFAR-10 and the balanced accuracy for ISIC2019 over different communication rounds. For $\nm = 1$, the attack is not very powerful (regardless of the dataset), and the no defense and oracle benchmarks have similar performance. For $\nm \in \{3, 5\}$,  the impact on the top-1 accuracy of the attack for MNIST and CIFAR-10 is significant, as shown by the significant gap between the no defense and oracle curves. {\NAME}-$\Delta$ closes this gap, but RFA outperforms our strategy, due to its majority decision-based aggregation technique. 
For $\nm=5$, \NAME{}-$\nmest$ suffers compared to the other techniques for $\nm = 5$.
This is due to its reliance on an accurate estimate $\nmest$, something that becomes harder with a larger $\nm$ as more test groups are contaminated.
For the  ISIC2019 dataset, \NAME{}-$\nmest$ outperforms RFA for $\nm = 3,5$, while \NAME{}-$\Delta$ performs poorly for $\nm \geq 3$. This occurs due to the heterogeneity of ISIC2019 where a false-alarm incurs a significant penalty on the global model.
\begin{table}[t]
\caption{Top-1 or balanced accuracy (ACC) measured after specified communication rounds for MNIST, CIFAR10, and ISIC2019 datasets, for experiments with untargeted attacks. All entries are provided as mean and standard deviation with values in $\%$.}
\vspace{-3ex}
\begin{center}
\resizebox{\linewidth}{!}{%
\begin{tabular}{cccccc}
\toprule
& \multicolumn{1}{c}{Oracle} &\multicolumn{1}{c}{RFA~\cite{Pillutla22}} &\multicolumn{1}{c}{\NAME{}-$\nmest$} &\multicolumn{1}{c}{\NAME{}-$\Delta$} &\multicolumn{1}{c}{No defense}\\ \cmidrule(lr){2-2} \cmidrule(lr){3-3} \cmidrule(lr){4-4} \cmidrule(lr){5-5} \cmidrule(lr){6-6}
$\nm$ & ACC $\uparrow$ & ACC $\uparrow$ & ACC $\uparrow$ & ACC $\uparrow$ & ACC $\uparrow$ \\
\midrule
\multicolumn{6}{c}{MNIST (10 communication rounds)} \\
\midrule
$0$ & $90.18 \pm 0.10$ & $90.18 \pm 0.09$ & $90.18 \pm 0.10$ & $90.18 \pm 0.10$ & $90.18 \pm 0.10$ \\
$1$ & $90.19 \pm 0.10$ & $90.22 \pm 0.11$ & $90.21 \pm 0.09$ & $90.19 \pm 0.10$ & $89.96 \pm 0.08$ \\
$2$ & $90.18 \pm 0.12$ & $90.20 \pm 0.11$ & $90.14 \pm 0.19$ & $90.10 \pm 0.18$ & $89.01 \pm 0.11$ \\
$3$ & $90.18 \pm 0.09$ & $90.17 \pm 0.09$ & $89.94 \pm 0.28$ & $90.08 \pm 0.16$ & $87.52 \pm 0.13$ \\
$4$ & $90.16 \pm 0.10$ & $90.17 \pm 0.09$ & $88.72 \pm 1.20$ & $89.75 \pm 1.06$ & $85.06 \pm 0.21$ \\
$5$ & $90.17 \pm 0.12$ & $90.16 \pm 0.09$ & $84.96 \pm 5.20$ & $89.61 \pm 1.34$ & $80.36 \pm 0.26$ \\
\midrule
\multicolumn{6}{c}{CIFAR10 (30 communication rounds)} \\
\midrule
$0$ & $81.66 \pm 0.16$ & $81.94 \pm 0.28$ & $81.40 \pm 0.48$ & $80.47 \pm 2.32$ & $81.66 \pm 0.16$ \\
$1$ & $81.94 \pm 0.27$ & $81.64 \pm 0.19$ & $81.64 \pm 0.34$ & $81.67 \pm 0.36$ & $81.49 \pm 0.24$ \\
$2$ & $81.60 \pm 0.15$ & $81.40 \pm 0.21$ & $81.12 \pm 0.45$ & $80.77 \pm 0.71$ & $80.73 \pm 0.08$ \\
$3$ & $81.28 \pm 0.17$ & $81.13 \pm 0.31$ & $80.90 \pm 0.40$ & $79.78 \pm 1.86$ & $77.73 \pm 2.91$ \\
$4$ & $80.99 \pm 0.30$ & $79.78 \pm 0.47$ & $80.40 \pm 0.46$ & $78.92 \pm 1.75$ & $56.49 \pm 1.82$ \\
$5$ & $80.94 \pm 0.39$ & $78.52 \pm 2.43$ & $71.71 \pm 8.70$ & $76.79 \pm 0.76$ & $49.07 \pm 19.31$ \\
\midrule
\multicolumn{6}{c}{ISIC2019 (40 communication rounds)} \\
\midrule
$0$ & $61.88$ & $61.26$ & $62.70$ & $62.74$ & $61.88$ \\
$1$ & $61.63 \pm 1.03$ & $62.07 \pm 0.60$ & $61.80 \pm 0.45$ & $63.13 \pm 0.78$ & $61.03 \pm 1.17$ \\
$2$ & $62.53 \pm 1.40$ & $62.86 \pm 0.66$ & $63.58 \pm 0.19$ & $62.48 \pm 1.66$ & $59.13 \pm 1.06$ \\
$3$ & $61.84 \pm 1.80$ & $60.15 \pm 1.55$ & $61.48 \pm 1.46$ & $57.01 \pm 4.03$ & $54.02 \pm 1.51$ \\
$4$ & $61.34 \pm 0.80$ & $58.87 \pm 1.17$ & $58.88 \pm 3.27$ & $53.27 \pm 2.85$ & $49.75 \pm 0.80$ \\
$5$ & $58.87 \pm 0.19$ & $52.34 \pm 0.64$ & $55.47 \pm 0.71$ & $50.12 \pm 1.89$ & $42.64 \pm 1.96$ \\
\bottomrule
\end{tabular}
}
\end{center}
\label{tab:untargeted}
\vspace{-4ex}
\end{table}

Finally, we observe an interesting phenomenon for the experiments over the CIFAR-10 dataset. For $\nm=5$, the no defense curve exhibits significant fluctuations throughout the rounds.
Although the performance of {\NAME}-$\nmest$ also fluctuates, it does so to a significantly lesser extent, while the fluctuations are more pronounced in \NAME{}-$\Delta$ and RFA. %

\subsection{Federated Learning With More Clients}
\label{sec:more_devices}

Hitherto, the experiments have focused on a cross-silo FL scenario with $15$ clients. Next, we investigate the performance of {\NAME} for a cross-silo FL scenario with a larger number of clients, specifically $n=30$ clients. 
For this scenario, we choose as the assignment matrix the parity-check matrix of a $(30,18)$ cyclic code of length $30$ and dimension $18$, resulting in $12$ groups, each containing $6$ clients. The dual of this cyclic code has minimum Hamming distance $6$, thus {\NAME} preserves the same clients' privacy of secure aggregation with $6$ clients. This choice of $\bfA$ allows for $\nmmax = 8$, where the probability in~\eqref{eq:prop_hist} is constrained to $20\%$, i.e., $\kappa=0.2$.
\begin{figure}[t!]
\centering
\resizebox{\linewidth}{!}{
\input{./plots/untargeted_acc_vs_comm.tex}}
 \vspace{-3ex}
    \caption{Average top-1 accuracy on the MNIST (row 1), CIFAR10 (row 2) and ISIC2019 (row 3) datasets for varying $\nm$.}
    \label{fig:untarg_att_acc_comm}
    \vspace{-3ex}
\end{figure}

We investigate a scenario with $\nm = 6$  malicious clients and both a targeted attack and an untargeted attack. 
We conduct experiments over the MNIST and CIFAR-10 datasets, with the same hyperparameters as specified in Section~\ref{subsec:Hyperparameters}. Due to the relatively high number of clients, we do not run the experiments over ISIC2019, as this dataset is tailored to FL scenarios with a smaller number of clients.

In Fig.~\ref{fig:30clients}, we plot the attack accuracy of the targeted attack (row 1) and the top-1 accuracy of the untargeted attack (row 2), respectively. For a targeted attack, {\NAME-$\Delta$} performs very close to the oracle and outperforms RFA for both datasets, with the improvement in performance  being significant for the CIFAR-10 dataset. For an an untargeted attack, FedGT-$\Delta$ performs similar to RFA and oracle.

\section{Conclusion}\label{sec:conclusion}

We proposed FedGT, a novel and flexible framework for identifying malicious clients in {\FL} that is compatible with secure aggregation and does not require  hyperparameter tuning. 
By grouping clients into overlapping groups, FedGT 
enables the 
identification of malicious clients at the expense of secure aggregation involving fewer clients. 
Experiments conducted in a cross-silo scenario for different data-poisoning attacks demonstrate the effectiveness of FedGT in identifying malicious clients, resulting in high model utility and low attack accuracy. Remarkably, FedGT significantly outperforms the  recently-proposed robust federated aggregation (RFA) protocol based on the geometric median (which is unable to identify malicious clients and entails a much higher communication cost) across multiple scenarios. 
To the best of our knowledge, this is the first work that provides a solution for identifying malicious clients in {\FL} with secure aggregation.

\begin{figure}[t!]
    \centering
    \resizebox{\linewidth}{!}{\definecolor{chocolate}{rgb}{0.48, 0.25, 0.0}
\definecolor{amber}{rgb}{1.0, 0.75, 0.0}
\begin{tikzpicture}
[spy using outlines={rectangle, magnification=6, size=4cm, connect spies}]

\begin{groupplot}[ 
    group style={
        group size=2 by 2,
        horizontal sep=2cm, vertical sep =.7cm,
   },
]
\nextgroupplot
[
    grid = both, 
    grid style={dotted,draw=black!90},
    tick label style={/pgf/number format/fixed},
    xmode = linear, 
    ymode = linear, 
    ymax = 0.01, 
    ymin = 0.0, 
    xmax = 10, 
    xmin = 1,
    xtick = {1,2,3,4,5,6,7,8,9,10},
    ylabel = \large    attack accuracy,
    legend style =
        {
            legend columns=5,
            fill=none,
            draw=black,
            anchor=center,
            align=center
        },
    legend to name=acc_legend4
]

    \addplot[brown, ultra thick, solid, mark=pentagon*, mark options={fill=white} ,mark size=2.5pt] table [x index = {0}, y index={2}, col sep=comma]{./appendix_plots/csv_files_app/30_clients/MNIST_targeted_attack_acc_m_6.csv}; \addlegendentry{\footnotesize no defense}
    
    \addplot[red, ultra thick, mark=*, mark options={fill=white}, mark size=1.5pt] table [x index = {0}, y index={4}, col sep=comma]{./appendix_plots/csv_files_app/30_clients/MNIST_targeted_attack_acc_m_6.csv};
    \addlegendentry{\footnotesize {\NAME}-$\nmest$}

    \addplot[amber, ultra thick, mark=triangle*, mark options={fill=white}, mark size=2pt] table [x index = {0}, y index={3}, col sep=comma]{./appendix_plots/csv_files_app/30_clients/MNIST_targeted_attack_acc_m_6.csv};
    \addlegendentry{\footnotesize {\NAME}-$\Delta$}
  
    \addplot[darkblue, ultra thick, solid, mark=square*, mark options={fill=white}, mark size=1.5pt] table [x index = {0}, y index={1}, col sep=comma]{./appendix_plots/csv_files_app/30_clients/MNIST_targeted_attack_acc_m_6.csv};
    \addlegendentry{\footnotesize oracle}

    \addplot[chocolate, ultra thick, solid, mark=diamond*, mark options={fill=white} ,mark size=2.7pt] table [x index = {0}, y index={5}, col sep=comma]{./appendix_plots/csv_files_app/30_clients/MNIST_targeted_attack_acc_m_6.csv};
    \addlegendentry{RFA~\cite{Pillutla22}}

    \node[anchor=west] (source) at (axis cs:2,0.0047){\large    Group test performed};
    \node (destination) at (axis cs:1,0.0045){};
    \draw[->](source) to[bend right=20] (destination);
     
\nextgroupplot
    [
    grid = both, 
    grid style={dotted,draw=black!90},
    tick label style={/pgf/number format/fixed},
    xmode = linear, 
    ymode = linear, 
    ymax = 0.7, 
    ymin = 0.0, 
    xmax = 30, 
    xmin = 1,
    ytick = {0, 0.1, 0.2, 0.3, 0.4, 0.5, 0.6, 0.7},
    mark repeat=3,
    ylabel = \large attack accuracy,
    ]
    
     \addplot[brown, ultra thick, solid, mark=pentagon*, mark options={fill=white} ,mark size=2.5pt] table [x index = {0}, y index={2}, col sep=comma]{./appendix_plots/csv_files_app/30_clients/CIFAR10_targeted_attack_acc_m_6.csv}; 
    
    \addplot[red, ultra thick, mark=*, mark options={fill=white}, mark size=1.5pt] table [x index = {0}, y index={4}, col sep=comma]{./appendix_plots/csv_files_app/30_clients/CIFAR10_targeted_attack_acc_m_6.csv};

    \addplot[amber, ultra thick, mark=triangle*, mark options={fill=white}, mark size=2pt] table [x index = {0}, y index={3}, col sep=comma]{./appendix_plots/csv_files_app/30_clients/CIFAR10_targeted_attack_acc_m_6.csv};
  
    \addplot[darkblue, ultra thick, solid, mark=square*, mark options={fill=white}, mark size=1.5pt] table [x index = {0}, y index={1}, col sep=comma]{./appendix_plots/csv_files_app/30_clients/CIFAR10_targeted_attack_acc_m_6.csv};

    \addplot[chocolate, ultra thick, solid, mark=diamond*, mark options={fill=white} ,mark size=2.7pt] table [x index = {0}, y index={5}, col sep=comma]{./appendix_plots/csv_files_app/30_clients/CIFAR10_targeted_attack_acc_m_6.csv};

    \draw [dashed, thick] (5,1e-2) -- (5,1);
    \node[anchor=west] (source) at (axis cs:10,0.145){\large    Group test performed};
    \node (destination) at (axis cs:5,0.14){};
    \draw[->](source) to[bend right=20] (destination);

\nextgroupplot
[
    grid = both, 
    grid style={dotted,draw=black!90},
    tick label style={/pgf/number format/fixed},
    xmode = linear, 
    ymode = linear, 
    ymax = 0.9, 
    ymin = 0.75, 
    xmax = 10, 
    xmin = 1,
    xtick = {1,2,3,4,5,6,7,8,9,10},
    xlabel = \large    Communication rounds, 
    ylabel = \large    top-1 accuracy
]

    \addplot[brown, ultra thick, solid, mark=pentagon*, mark options={fill=white} ,mark size=2.5pt] table [x index = {0}, y index={2}, col sep=comma]{./appendix_plots/csv_files_app/30_clients/MNIST_untargeted_acc_vs_m_6.csv}; 
    
    \addplot[red, ultra thick, mark=*, mark options={fill=white}, mark size=1.5pt] table [x index = {0}, y index={4}, col sep=comma]{./appendix_plots/csv_files_app/30_clients/MNIST_untargeted_acc_vs_m_6.csv};

    \addplot[amber, ultra thick, mark=triangle*, mark options={fill=white}, mark size=2pt] table [x index = {0}, y index={3}, col sep=comma]{./appendix_plots/csv_files_app/30_clients/MNIST_untargeted_acc_vs_m_6.csv};
  
    \addplot[darkblue, ultra thick, solid, mark=square*, mark options={fill=white}, mark size=1.5pt] table [x index = {0}, y index={1}, col sep=comma]{./appendix_plots/csv_files_app/30_clients/MNIST_untargeted_acc_vs_m_6.csv};

    \addplot[chocolate, ultra thick, solid, mark=diamond*, mark options={fill=white} ,mark size=2.7pt] table [x index = {0}, y index={5}, col sep=comma]{./appendix_plots/csv_files_app/30_clients/MNIST_untargeted_acc_vs_m_6.csv};

    \node[anchor=west] (source) at (axis cs:2,0.77){\large    Group test performed};
    \node (destination) at (axis cs:1,0.75){};
    \draw[->](source) to[bend right=20] (destination);
     
\nextgroupplot
    [
    grid = both, 
    grid style={dotted,draw=black!90},
    tick label style={/pgf/number format/fixed},
    xmode = linear, 
    ymode = linear, 
    ymax = 0.8, 
    ymin = 0.1, 
    xmax = 30, 
    xmin = 1,
    ytick = {0, 0.1, 0.2, 0.3, 0.4, 0.5, 0.6, 0.7, 0.8},
    mark repeat=3,
    xlabel = \large    Communication rounds, 
    ylabel = \large    top-1 accuracy,
    ]
    
     \addplot[brown, ultra thick, solid, mark=pentagon*, mark options={fill=white} ,mark size=2.5pt] table [x index = {0}, y index={2}, col sep=comma]{./appendix_plots/csv_files_app/30_clients/CIFAR10_untargeted_acc_vs_m_6.csv}; 
    
    \addplot[red, ultra thick, mark=*, mark options={fill=white}, mark size=1.5pt] table [x index = {0}, y index={4}, col sep=comma]{./appendix_plots/csv_files_app/30_clients/CIFAR10_untargeted_acc_vs_m_6.csv};

    \addplot[amber, ultra thick, mark=triangle*, mark options={fill=white}, mark size=2pt] table [x index = {0}, y index={3}, col sep=comma]{./appendix_plots/csv_files_app/30_clients/CIFAR10_untargeted_acc_vs_m_6.csv};
  
    \addplot[darkblue, ultra thick, solid, mark=square*, mark options={fill=white}, mark size=1.5pt] table [x index = {0}, y index={1}, col sep=comma]{./appendix_plots/csv_files_app/30_clients/CIFAR10_untargeted_acc_vs_m_6.csv};

    \addplot[chocolate, ultra thick, solid, mark=diamond*, mark options={fill=white} ,mark size=2.7pt] table [x index = {0}, y index={5}, col sep=comma]{./appendix_plots/csv_files_app/30_clients/CIFAR10_untargeted_acc_vs_m_6.csv};

    \draw [dashed, thick] (5,1e-2) -- (5,1);
    \node[anchor=west] (source) at (axis cs:10,0.25){\large Group test performed};
    \node (destination) at (axis cs:5,0.25){};
    \draw[->](source) to[bend right=20] (destination);

\end{groupplot}
\node[above] at ([xshift=-4.5cm, yshift=5mm]group c2r1.north)
{\pgfplotslegendfromname{acc_legend4}};

\node[text width=6cm,align=center,anchor=north] at ([yshift=-12mm]group c1r2.south) {\large   Experiments on MNIST. \label{fig:mnist_n30}};

\node[text width=6cm,align=center,anchor=north] at ([yshift=-12mm]group c2r2.south) {\large   Experiments on CIFAR-10. \label{fig:cifar_n30}};

\end{tikzpicture}
    \caption{Experimental results for a federated learning with $n=30$ clients out of which $\nm=6$ of them are malicious. The attack accuracy of a targeted attack is shown in row 1 and the top-1 accuracy of an untargeted attack strategy is shown in row 2. We show the performance of {\NAME} along with no defense, oracle, and RFA~\cite{Pillutla22}.}
    \label{fig:30clients}
    \vspace{-3ex}
\end{figure}

\bibliographystyle{ieeetr}
\bibliography{main}

\section*{Appendix\\ Details of the ISIC2019 Experiment}
\label{app:isic}

\begin{figure}[t]
    \centering
    \resizebox{.9\linewidth}{!}{\begin{tikzpicture}

    \def \threshold{20}

    \pgfplotsset{
        selective show sum on top/.style={
            /pgfplots/scatter/@post marker code/.append code={%
                \ifnum\coordindex=#1
                   \node[
                   at={(normalized axis cs:%
                       \pgfkeysvalueof{/data point/x},%
                       \pgfkeysvalueof{/data point/y})%
                   },
                   anchor=south,
                   ]
                   {\pgfmathprintnumber{\pgfkeysvalueof{/data point/y}}};
                \fi
            },
        },selective show sum on top/.default=0
    }

    \begin{axis}[
    ybar stacked, ymin=0, 
    bar width=8mm,
    symbolic x coords={1,2,3,4,5,6,7,8,9,10,11,12,13,14,15},
    xtick=data,
    nodes near coords,
    yticklabel=\empty,
    ytick style={draw=none},
    ylabel = \LARGE label count,
    xlabel = \LARGE client number,
    width=20cm,
nodes near coords,
        nodes near coords greater equal only/.style={
            small value/.style={
                /tikz/coordinate,
            },
            every node near coord/.append style={
                check for small values/.code={
                    \begingroup
                    \pgfkeys{/pgf/fpu}
                    \pgfmathparse{\pgfplotspointmeta<#1}
                    \global\let\result=\pgfmathresult
                    \endgroup
                    \pgfmathfloatcreate{1}{1.0}{0}
                    \let\ONE=\pgfmathresult
                    \ifx\result\ONE
                        \pgfkeysalso{/pgfplots/small value}
                    \fi
                },
                check for small values,
            },
        },
        nodes near coords greater equal only=25,
    ]

    \addplot [fill=blue!20] coordinates {
        ({1},272)
        ({2},291)
        ({3},292)
        ({4},8)
        ({5},60)
        ({6},189)
        ({7},301)
        ({8},323)
        ({9},328)
        ({10},10)
        ({11},274)
        ({12},291)
        ({13},289)
        ({14},327)
        ({15},319)};
    \addplot [fill=red!20] coordinates {
        ({1},470)
        ({2},686)
        ({3},780)
        ({4},1556)
        ({5},289)
        ({6},347)
        ({7},433)
        ({8},443)
        ({9},448)
        ({10},1557)
        ({11},776)
        ({12},454)
        ({13},454)
        ({14},445)
        ({15},449)};
    \addplot [fill=green!20] coordinates {
        ({1},309)
        ({2},247)
        ({3},86)
        ({4},2)
        ({5},3)
        ({6},1e-10)
        ({7},325)
        ({8},289)
        ({9},284)
        ({10},1e-10)
        ({11},91)
        ({12},300)
        ({13},326)
        ({14},291)
        ({15},282)};
    \addplot [fill=orange!20] coordinates {
        ({1},75)
        ({2},90)
        ({3},10)
        ({4},1e-10)
        ({5},1e-10)
        ({6},1e-10)
        ({7},82)
        ({8},90)
        ({9},75)
        ({10},1e-10)
        ({11},7)
        ({12},75)
        ({13},75)
        ({14},70)
        ({15},73)};
    \addplot [fill=cyan!20] coordinates {
        ({1},129)
        ({2},410)
        ({3},190)
        ({4},60)
        ({5},10)
        ({6},165)
        ({7},116)
        ({8},122)
        ({9},111)
        ({10},53)
        ({11},210)
        ({12},134)
        ({13},120)
        ({14},121)
        ({15},129)};
    \addplot [fill=pink!20] coordinates {
        ({1},15)
        ({2},25)
        ({3},29)
        ({4},13)
        ({5},4)
        ({6},1e-10)
        ({7},12)
        ({8},9)
        ({9},17)
        ({10},14)
        ({11},21)
        ({12},11)
        ({13},11)
        ({14},12)
        ({15},18)};
    \addplot [fill=magenta!20] coordinates {
        ({1},10)
        ({2},1)
        ({3},34)
        ({4},20)
        ({5},2)
        ({6},1e-10)
        ({7},12)
        ({8},12)
        ({9},9)
        ({10},25)
        ({11},39)
        ({12},18)
        ({13},10)
        ({14},16)
        ({15},11)};
    \addplot [fill=gray!20, selective show sum on top/.list={0,1,2,3,4,5,6,7,8,9,10,11,12,13,14,15}] coordinates {
        ({1},47)
        ({2},156)
        ({3},3)
        ({4},1e-10)
        ({5},1e-10)
        ({6},1e-10)
        ({7},46)
        ({8},39)
        ({9},55)
        ({10},1e-10)
        ({11},7)
        ({12},45)
        ({13},42)
        ({14},45)
        ({15},46)};
    \legend{0,1,2,3,4,5,6,7}
    \end{axis}
 \end{tikzpicture}}
    \vspace{-1ex}
    \caption{Client data partitions for experiments over the ISIC2019 dataset.}
    \label{fig:ISICpartition}
    \vspace{-3ex}
\end{figure}

The ISIC2019 dataset~\cite{Codella} is a public dataset consisting of images of various skin lesion types, including malignant melanomas and benign moles, used for research in dermatology and skin cancer detection. We use the ISIC2019 dataset and follow~\cite{Ogier2022flamby} by first restricting our usage to 23247 data samples (out of 25331 entries) from the public dataset due to metadata availability and then preprocessing by resizing the shorter side to $224$ pixels and by normalizing the contrast and brightness of the images.
Next, we randomly divide the data  into a test and a training dataset  of size $3388$ and $19859$, respectively.
The server validation set is created by randomly sampling $100$ data entries from the dataset.
Next, as in~\cite{Ogier2022flamby}, the remaining $19759$  samples are partitioned into $6$ partitions with respect to the image acquisition system used.
The $6$ partitions are then split into $15$ partitions by iteratively splitting the largest partition in half.
This procedure results in partitions with heterogeneity in the number of data samples and label distribution (see Fig.~\ref{fig:ISICpartition}), and in the feature distribution due to different acquisition systems (see~\cite[Fig.~1.f]{Ogier2022flamby}).

Due to the large imbalance in the dataset (label $1$ corresponds to $48.7$\% whereas label $5$ and $6$ are represented by about $1$\% of the entries), the focal loss is used in the training~\cite{Lin2017focal} and we use the balanced accuracy to assess the performance of the trained network.
Furthermore, to encourage generalization during training, we follow~\cite[App.~H]{Ogier2022flamby} and apply random augmentations to the training data.

Finally, the heterogeneous data partitioning causes the choice of malicious clients to significantly impact the outcome of the experiment.
For this reason, we let the set of malicious clients  $\mathcal{M}_{i} \subset \mathcal{M}_{j}$ for $j>i$ to ensure that results across different values of $\nm$ are comparable. %

\end{document}